\documentclass{article}

\usepackage{microtype}
\usepackage{graphicx}
\usepackage{subcaption}
\usepackage{booktabs} 
\usepackage{xcolor}
\usepackage{tcolorbox}
\usepackage{multirow}

\usepackage{hyperref}

\usepackage[accepted]{icml2026}

\usepackage{amsmath}
\usepackage{amssymb}
\usepackage{mathtools}
\usepackage{amsthm}

\usepackage[capitalize,noabbrev]{cleveref}

\theoremstyle{plain}
\newtheorem{theorem}{Theorem}[section]
\newtheorem{proposition}[theorem]{Proposition}
\newtheorem{lemma}[theorem]{Lemma}
\newtheorem{corollary}[theorem]{Corollary}
\theoremstyle{definition}
\newtheorem{definition}[theorem]{Definition}
\newtheorem{assumption}[theorem]{Assumption}
\theoremstyle{remark}
\newtheorem{remark}[theorem]{Remark}

\usepackage[textsize=tiny]{todonotes}

\icmltitlerunning{Understanding Generalization and Forgetting in In-Context Continual Learning}

\begin{document}

\twocolumn[
  \icmltitle{Understanding Generalization and Forgetting in In-Context Continual Learning}



  \icmlsetsymbol{equal}{*}

  \begin{icmlauthorlist}
    \icmlauthor{Guangyu Li}{1,equal}
    \icmlauthor{Meng Ding}{1,2,equal}
    \icmlauthor{Lijie Hu}{1}
  \end{icmlauthorlist}

  \icmlaffiliation{1}{Mohamed bin Zayed University of Artificial Intelligence (MBZUAI)}
  \icmlaffiliation{2}{University of Buffalo}
  \icmlcorrespondingauthor{Lijie Hu}{lijie.hu@mbzuai.ac.ae}

  \icmlkeywords{Machine Learning, ICML}

  \vskip 0.3in
]



\printAffiliationsAndNotice{\icmlEqualContribution}
 
\begin{abstract}
In-context learning (ICL) derives its power from enabling Large Language Models to adapt to new tasks via prompt-based reasoning alone, entirely bypassing the need for parameter updates. Existing theories primarily study ICL in single-task settings, while real-world prompts often contain sequences of heterogeneous tasks, leaving a gap in understanding whether Large Language Models implicitly perform continual learning during inference. To bridge this gap, we propose the first theoretical framework for in-context continual learning, modeling how a pretrained Transformer processes multiple sequential tasks within a single prompt through shared attention mechanisms. Focusing on linear and masked linear self-attention, we derive error expressions for model predictions under sequential task prompts and analyze their generalization and forgetting behavior. Our results reveal that standard attention mechanisms inevitably induce inter-task interference by uniformly or causally aggregating historical contexts, leading to systematic bias. We further provide a bias–variance–interference decomposition of prediction error, characterizing when historical in-context information yields positive transfer or provable negative transfer. This analysis exposes fundamental limits of attention-based continual inference and offers theoretical explanations for order sensitivity and performance degradation in long prompts.
\end{abstract}

\vspace{-0.1in}\section{Introduction}
Large language models (LLMs) exhibit remarkable generalization abilities across a wide range of tasks, among which \emph{in-context learning} (ICL) has emerged as one of their most prominent features~\cite{brown2020language}. Through ICL, pretrained LLMs adapt to a new task purely at inference time using a small set of demonstrations embedded within the prompt, without any parameter updates. This parameter-free adaptation makes LLMs versatile and efficient solvers, tackling diverse problems~\citep{garg2022what,pmlr-v202-li23l,hu2024understanding,jiao2026understanding,tang2026context}.

The inference-time nature of ICL suggests a tantalizing possibility: beyond one-shot adaptation, could LLMs accumulate, reuse, and update task-relevant information over long contexts, effectively behaving as a \emph{continuous learner}? In practice, real-world prompts are often lengthy and heterogeneous, naturally containing a sequence of tasks rather than a single one. When such prompts are processed by a shared attention mechanism, representations from earlier tasks inevitably influence predictions for later tasks~\citep{kang2025incontextlearningperformcontinual,momeni-etal-2025-context}. This interaction can induce phenomena reminiscent of continual learning (CL)~\citep{shi2025survey,zheng2025survey}, including knowledge transfer, interference, and even negative transfer, yet occurring entirely during inference without parameter updates.

Despite its practical relevance, this phenomenon remains poorly understood. Existing theoretical studies of ICL predominantly focus on single-task settings, where all in-context examples are generated by the same underlying function in one turn~\citep{bai2023transformers,ahn2023transformers,pmlr-v202-von-oswald23a,zhang_context_2024}, and thus do not capture inter-task dependencies or sequential interference. Moreover, classical continual learning theory primarily analyzes training-time dynamics driven by parameter updates~\citep{pmlr-v202-lin23f,pmlr-v178-evron22a,pmlr-v235-ding24c,li2025towards}, which fundamentally differs from the parameter-free, attention-based adaptation intrinsic to ICL. As a result, a critical gap remains between these two lines of research. This gap motivates the following central question:
\begin{tcolorbox}[boxrule=0.2pt, boxsep=0pt]
\centering
\textbf{\textit{How do LLMs perform continual learning implicitly through attention during inference?}}
\end{tcolorbox} 

In this work, we take a first step toward answering this question by developing a theoretical framework for in-context continual learning to understand the generalization and forgetting behaviors in sequential tasks. Our analysis reveals that even in the absence of parameter updates, attention-based aggregation induces systematic bias and interference across tasks, leading to provable limits on continual inference. Our main contributions are summarized as follows:
\begin{itemize}
    \item \textbf{First Theoretical Formalization of In-Context Continual Learning:} We formalize \emph{in-context continual learning} as a setting in which a pretrained Transformer processes a sequence of tasks concatenated into a single prompt, performing sequential task inference through shared attention without parameter updates. This formulation explicitly bridges ICL and core concepts in continual learning, including task order, historical accumulation, and interference.
    \item \textbf{Bias-Variance-Interference Decomposition.} We provide an explicit bias-variance-interference decomposition of prediction error for all specific tasks. This characterization quantifies how task similarity, context length, and the order of tasks jointly determine whether historical context yields positive transfer or provable negative transfer. In particular, we show that increasing context length or task history does not universally improve performance and can degrade generalization when tasks are misaligned.
    \item \textbf{Theoretical Grounding for Empirical Phenomena.} Our framework explains key phenomena observed in real models, such as order-dependent forgetting, the plateauing benefit of longer contexts, and the impact of task similarity on interference. These results provide concrete guidance for designing prompts and evaluating LLM behavior in multi-task or continual settings.
\end{itemize}

Overall, our results suggest that even without parameter updates, LLMs inherently accumulate, interfere, and sometimes forget task-specific information over long contexts, highlighting both the potential and the limits of attention-driven continual inference and providing concrete guidance for prompt design and multi-task evaluation in practice.

\vspace{-0.1in}\section{Related Work}
\noindent \textbf{Empirical studies of synthetic ICL tasks.}
\citet{brown2020language} highlighted the exceptional capabilities of the Transformer model in in-context learning within their GPT-3 research. This intriguing phenomenon of Transformers has attracted many attentions, leading to various interpretations and hypotheses about its underlying mechanism~\cite{wei2022emergent,olsson2022context,schaeffer2023are,chan_data_2022,raventos_pretraining_2023,min-etal-2022-metaicl}. A recent work by~\citet{kang2025incontextlearningperformcontinual} investigates the ability of in-context learning (ICL) to perform continual learning without parameter updates by leveraging task scheduling and prompt rearrangement. They introduce an in-context continual learning (ICCL) paradigm where large language models demonstrate long-term retention and reduced catastrophic forgetting in multitask sequences. While this line of work reveals that ICL can emulate continual learning phenomena at inference time, it does not explain when and why in-context generalization succeeds or fails, nor how interference between tasks leads to forgetting, which is the focus of our work. 

\noindent \textbf{Theoretical studies of In-context Learning.}
Much theoretical research on ICL centers on the perspective that Transformer models learn specific algorithms during pretraining, which they can then flexibly deploy to solve in-context tasks~\cite{xie2022an,bai2023transformers,li2024how,liu2025on,pmlr-v258-chang25b}. Within this line of research, closest to our work are a series of papers that consider ICL of linear regression by simplified Transformers using linear attention modules~\cite{bai2023transformers,ahn2023transformers,pmlr-v202-von-oswald23a,zhang_context_2024,wu_how_2024,duraisamy2024finitesampleanalysisbounds}. However, most existing theoretical analyses focus on single-task in-context learning, and few studies examine the behavior of ICL in multi-task sequential settings at inference time. In particular, how generalization performance evolves across a sequence of in-context tasks and how interference between tasks induces forgetting remains poorly understood. 

\noindent \textbf{Continual Learning.}
Continual learning (CL) is a learning paradigm where a model needs to continuously learn a sequence of tasks~\cite{shi2025survey,zheng2025survey}. A primary challenge in continual learning is the \textit{Catastrophic Forgetting}, wherein the model forgets previously acquired knowledge when exposed to new data~\cite{liu2022continual,pmlr-v80-serra18a,jin_gradient-based_2021,momeni-etal-2025-context,irie2025metalearning}. Based on the huge empirical achievement of continual learning, lots of works aim to analyze the forgetting and generalization error theoretically. \citet{pmlr-v202-lin23f} provides explicit theoretical results in a general CL setup with an arbitrary number of tasks, which enables us to comprehensively understand what factors affect the forgetting and generalization performance of CL. Subsequently, a growing body of theoretical work has further deepened these insights, offering more refined analyses of forgetting dynamics and generalization behavior under diverse continual learning settings~\cite{pmlr-v178-evron22a,pmlr-v235-ding24c,li2025towards,li2025optimal,irie2025metalearning}.

\vspace{-0.1in}\section{Preliminaries}
\textbf{Notation.}
We first describe the notation we use in the paper. Let $[n] = \{1, 2, ..., n\}.$ For matrix $X$, we use $[X]_{p:q,r:s}$ to denote the submatrix that contains rows $p$ to $q$ and columns $r$ to $s$, and we use $[X]_{:,i}$ and $[X]_{j,:}$ to denote the i-th column and j-th row of X respectively. We use $I_d$ to denote the $d$-dimensional identity matrix and sometimes we also use $I$ when the dimension is clear from the context. Unless otherwise defined, we use lower case letters for scalars and vectors, and use upper case letters for matrices.

\subsection{Transformer and Masked Linear Attention}
\label{lsa}
Following the existing ICL analysis, we focus on a decoder-based Transformer architecture, where each attention layer adopts a decoder-based attention mask~\cite{bai2023transformers,ahn2023transformers,pmlr-v202-von-oswald23a,zhang_context_2024,wu_how_2024,duraisamy2024finitesampleanalysisbounds,zhang2026benign}. Before formalizing the specific models analyzed, we first recall the definition of the standard softmax self-attention mechanism.

\begin{definition}[Masked Multi-Head Self-Attention]
Let $D$ denote the embedding dimension and let $P\in\mathbb{R}^{D\times N}$ be an input sequence of length $N$. An $L$-head masked self-attention layer is parameterized by $\{(V_l,Q_l,K_l)\}_{l=1}^L$ with $V_l,Q_l,K_l\in\mathbb{R}^{D\times D}$. The output of the attention layer is
\[
\mathrm{Attn}(P)
= P + \sum_{l=1}^L (V_l P)\,
\mathrm{mask}\!\left(
\sigma\!\left((K_l P)^\top(Q_l P)\right)
\right),
\]
where $\sigma(\cdot)$ denotes the softmax function and the mask matrix satisfies
\[
[\mathrm{mask}(A)]_{i,j} =
\begin{cases}
\frac{1}{j}A_{i,j}, & i\le j,\\
0, & i>j.
\end{cases}
\]
\end{definition}

Following previous theoretical studies on in-context learning~\cite{zhang2024trained, Lu_2025}, we consider two simplified versions of the single-layer self-attention, which are more amenable to theoretical analysis and yet are still capable of in-context learning linear models.

\noindent \textbf{Linear self-attention.}
The linear self-attention model is defined as
\[
f_{\mathrm{LSA}}(P;\theta)
= P + WP\frac{P^\top VP}{\rho},
\]
where $W$ and $V$ are learnable matrices parameterized by $\theta$, and $\rho$ is a normalization factor. This formulation removes the softmax nonlinearity and merges the query--key and projection--value matrices, and has been widely used in theoretical studies of transformers and in-context learning~\cite{zhang2024trained,ahn2023transformers,olsson2022context,pmlr-v202-li23l}.

While this model supports in-context learning of linear predictors in single-task scenarios, it mixes information across all tokens uniformly and does not respect task boundaries or order in multi-task prompts. As we show in Theorem~\ref{prediction} later, this leads to information leakage from future tasks: the model exploits data from subsequent tasks to predict $y_{t, N+1}$ for the current task $t$.

\noindent \textbf{Masked linear self-attention.}
To encode task order structure, we introduce a causal mask:
\[
f_{\mathrm{MSA}}(P;\theta)
= P + WP\cdot
\mathrm{mask}\!\left(P^\top VP\right).
\]
This model ensures that information flows only forward in the sequence, so that earlier tokens may influence later ones but not vice versa. Compared to linear attention, masked linear attention better aligns with real-world scenarios. Currently, the vast majority of attention-based models adopt this structure and our analysis will primarily focus on this model. 

\subsection{Single Task In-Context Learning}
In this work, we focus on in-context learning for regression tasks, where the target labels are continuous and depend on the input features. For an in-context learning task indexed by $t$, a trained Transformer is given $\mathcal{I}_t=(\mathcal{D}_t,x_{t,M+1})$, where $\mathcal{D}_t=\{(x_{t,i},y_{t,i})\}_{i=1}^M$ consists of in-context examples and $x_{t,M+1}$ is a query\footnote{In the subsequent sections, to distinguish examples used for in-context learning, we will denote the query and query label as \(x_{t,q}\) and \(y_{t,q}\) respectively.}. We assume $x_{t,i}\in\mathbb{R}^d$ and $y_{t,i}=f_{w_{t}}(x_{t,i})$, where $f_{w_t}$ is a deterministic function parameterized by $w_t$. For each ICL instance, $w_t$ is independently sampled from a distribution $P_{w_t}$.

\begin{definition}[Data Distribution]
For task $t$, inputs follow
\[
x_{t,i}\sim\mathcal N(0,\Lambda),\qquad
y_{t,i}=\langle w_t,x_{t,i}\rangle,
\]
and we define
\[
\mu_t=\mathbb E[x_{t,i}y_{t,i}],\qquad
\Sigma_t=\operatorname{Var}(x_{t,i}y_{t,i}).
\]
\end{definition}

one task-specific prompt corresponds to an embedding matrix $P_{t}$, defined as
\begin{equation}
\label{p_t}
    P_{t}: = \begin{pmatrix}
  x_{t,1}&\cdots  & x_{t,M} & x_{t,query} \\
  y_{t,1}&\cdots  & y_{t,M} & 0 \\
\end{pmatrix}\in \mathbb{R}^{D\times (M+1)}
\end{equation}

Given an embedded prompt matrix $P_t$, the Transformer produces an output sequence $\mathrm{TF}(P_t)$, and a readout function $F$ generates the prediction $\hat y_{t,q}=F(\mathrm{TF}(P))$. The goal of ICL is to ensure $\hat y_{t,q}$ approximates the target $y_{t,q}=f_{w_t}(x_{t,q})$, i.e., the label set to $0$ in $P_t$.

\subsection{In-Context Continual Learning}
Most existing theoretical analyses of ICL focus on a single task presented within a prompt, where all in-context examples are generated by the same underlying function. In this work, we are interested in understanding how ICL behaves when the prompt contains sequential tasks. In such settings, the language model processes a single long sequence using a shared attention, so information from earlier tasks may influence predictions for later tasks. 

This phenomenon naturally connects ICL with continual learning, where task interference and forgetting are central concerns. Traditional continuous learning often suffers from \emph{Catastrophic Forgetting} due to parameter updates during training on different task samples. 

We now extend the standard in-context learning setting to a sequential multi-task scenario. A sequence of tasks $t\in\{1,\dots,T\}$ arrives in order, and all tasks are concatenated into a single prompt processed by a shared attention.

Each task $t$ provides in-context examples $\{(x_{t,i},y_{t,i})\}_{i=1}^M$ and one query $x_{t,q}$, with $x_{t,i}, x_{t,q}\sim\mathcal N(0,\Lambda),\:y_{t,i}=\langle w_t,x_{t,i}\rangle$. The task parameter $w_t$ is fixed within a task but may vary across tasks.

From the in-context learning perspective, the model must infer the current task parameter $w_t$ from in-context examples and predict $y_{t,q}$ for the query token, without any parameter updates. From the continual learning perspective, the entire concatenated prompt
\[
\textbf{P}= \begin{pmatrix}
    P_1 & P_2 & \cdots & P_T & Final\_Query\\
\end{pmatrix}
\]
is processed by a single attention layer, so representations of earlier tasks may influence predictions for later tasks, potentially causing interference or forgetting. It is noteworthy that we use ‘Final\_Query’ to measure the forgetting metric, which we will formally define shortly. We explicitly denote it as $(x_{\tau,q},Y_{\tau,q})$, where $\tau$ is an index taking values in $[T]$. We vary $\tau$ to measure the model's learning or retention level for a specific $t$ after learning $T$ tasks. Correspondingly, we denote the prediction label as $\hat{Y}_{\tau,q}$ to distinguish it from $\hat{y}_{t,q}$ following each task.

\textbf{Performance Evaluation.}
\label{performance_defination}
In the single-task ICL setting, performance is typically measured by how well the model predicts the query label based on in-context examples. In the sequential multi-task setting, we need to evaluate not only the prediction accuracy for the current task but also the effect of how much ‘knowledge’ of old tasks has been forgotten after learning the whole task. Formally, we define:

\begin{definition}[Generalization]
    For task $t$, the generalization error measures how accurately the model predicts the query label from in-context examples:
    \[
    G_t = \mathbb{E}[(\hat y_{t,q}-y_{t,q})^2]
    \]
\end{definition}
\begin{definition}[Forgetting]
    For task $t$, we will measure the forgetting after $T$ tasks,
    \[
    F_t = \mathbb{E}[(\hat{Y}_{t,q}-\hat y_{t,q})^2]
    \]
    where $\hat{Y}_{t,q}$ is the corresponding prediction of $x_{\tau,q}$. It quantifies the increase in error on a previously seen task after the model has processed subsequent tasks.
\end{definition}

\begin{table*}[h]
\caption{Summary of In-Context Continual Learning Dynamics}
\label{T:iccl}
\vskip 0.1in
\begin{center}
	\begin{tabular}{p{0.25\linewidth} | p{0.32\linewidth} | p{0.32\linewidth}}
\toprule
\textbf{Factor} & \textbf{Effect on Generalization (Task $t$)} & \textbf{Effect on Forgetting} \\
\midrule
Context Length $M$ & Reduces variance from finite in-context samples; larger $M$ improves intra-task estimation & Variance-induced interference decays as $O(1/M)$; mean-induced interference remains constant for misaligned tasks\\
\hline
Task Similarity & High similarity between historical tasks and task $t$ reduces bias; low similarity increases bias term $\|\frac{1}{t}\sum_{s=1}^t w_s - w_t\|_2^2$ & Future tasks similar to $t$ increase interference via $\mathrm{tr}(\mu_i \mu_j^\top \Gamma^{-2} \Lambda)$; orthogonal tasks contribute minimally \\
\hline
Training Samples $N$ & Leveraging historical info (reduces bias) vs. resisting incorrect transfer (reduces variance/overfitting) & larger $N$ shifts forgetting to other factors; smaller $N$ lets training and sample covariance affect it \\
\hline
Task Ordering & N/A in single-task; in multi-task, affects aggregation of what historical tasks & Coefficients $\alpha_i(t)$ explicitly depend on task order; \\
\bottomrule
\end{tabular}\end{center}
\vskip -0.1in
\end{table*}
\vspace{-0.1in}\section{Main Results}
\label{theory}
In this section, we present our main theoretical results and their implications. First, we characterize the prediction error for a specific task $t$, which captures the generalization behavior of in-context continual learning. Second, we study the forgetting behavior across tasks by quantifying how well the model retains the knowledge of each task after observing a total of $T$ tasks. 

\subsection{Generalization}
\label{4.1}
\begin{lemma}[Prediction of Linear Self-attention under multiple tasks.]
\label{prediction}
Consider $T$ tasks, each providing $M$ in-context training examples 
$\{(x_{t,i},y_{t,i})\}_{i=1}^M$, and a query input $x_{t,q}$ for task $t$.  
Under Linear Self-Attention, the prediction is in general, for any task $1\le t\le T$,
\[
\hat{y}_{t,q}
=
x_{t,q}^\top\Gamma^{-1}
\left(
\frac{1}{M}\sum_{s=1}^T\sum_{i=1}^M x_{s,i}y_{s,i}
\right).
\]
\end{lemma}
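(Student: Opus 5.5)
The plan is to compute the LSA output at the query column directly, using the pretrained parameters that the single-task ICL literature obtains for linear regression (Zhang et al., Ahn et al.), and then read off the label coordinate. Write the concatenated prompt as $\mathbf P\in\mathbb R^{(d+1)\times N_{\mathrm{tot}}}$ with $N_{\mathrm{tot}}=T(M+1)$ columns; each query column carries label $0$. Following those works, I take the trained parameters in block form: $W=\begin{pmatrix} * & * \\ 0 & 1\end{pmatrix}$ restricted so that the last row of $W$ is $(0,\dots,0,1)$, and $V=\begin{pmatrix}\Gamma^{-1} & 0\\ 0 & 0\end{pmatrix}$, with normalization $\rho=M$. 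Here $\Gamma$ is the (population or finite-$N$ pretraining) matrix appearing in the global minimizer, roughly $\Gamma=(1+\tfrac1N)\Lambda+\tfrac{\mathrm{tr}(\Lambda)}{N}I_d$. I will state this as the assumed trained configuration; the lemma is about the form of the prediction given it.

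Next, the readout $F$ takes the $(d+1,\cdot)$ entry of the output at the column of $x_{t,q}$. Since $\mathbf P$ contributes $0$ in that entry, $\hat y_{t,q}=[W\mathbf P\,\mathbf P^\top V\mathbf P]_{d+1,\,\mathrm{col}(t,q)}/M$. The last row of $W\mathbf P$ is the label row $(y_{1,1},\dots,0,\dots)$. Thus $\hat y_{t,q}=\frac1M\sum_{\text{cols }c} y_c\, p_c^\top V p_{t,q}$, where $p_c$ is column $c$. With the block structure of $V$, $p_c^\top V p_{t,q}=x_c^\top\Gamma^{-1}x_{t,q}$. Query columns have $y_c=0$ and drop out, so the sum runs over all $s\in[T]$, $i\in[M]$. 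Symmetry of $\Gamma^{-1}$ then gives $x_{t,q}^\top\Gamma^{-1}\frac1M\sum_{s,i}x_{s,i}y_{s,i}$.

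The key observation for the multi-task statement is that LSA has no mask. Every demonstration column of every task, including tasks $s>t$, enters the sum, which yields the leakage noted after the definition.

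The main obstacle is justifying the parameter form and the choice $\rho=M$ rather than $TM$. The trained weights are derived for single-task prompts of length $M$, so I would argue that the pretrained model is deployed unchanged, keeping normalization $M$. This is exactly why the sum over $T$ tasks is unnormalized by $T$. If needed, I would cite the single-task global-minimizer characterization to fix $\Gamma$ and the zero blocks.
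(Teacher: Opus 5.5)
Your proposal is correct and follows essentially the same route as the paper. You substitute the converged gradient-flow parameters from Theorem~\ref{limit}; their $a^{1/4}$ and $a^{-1/4}$ factors cancel, so effectively the last row of $W$ is $e_{d+1}^\top$ and $V=\mathrm{diag}(\Gamma^{-1},0)$. You then read off the $(d+1)$ entry at the task-$t$ query column and observe that the unmasked product $\mathbf P\mathbf P^\top$ collects every demonstration from all $T$ tasks, while the zero-label query columns drop out.

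The paper writes the same computation as $(0_d^\top,1)\,\frac{\mathbf P\mathbf P^\top}{M}\,(\Gamma^{-1}x_{t,q};0)$ with $\mathbf P\mathbf P^\top=\sum_t P_tP_t^\top+q_\tau q_\tau^\top$, rather than as your column sum. It likewise takes $\rho=M$ as a convention without deriving it.
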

\begin{remark}
The proof could be found in Appendix~\ref{lemma4.1}. Lemma~\ref{prediction} reveals a key limitation of standard linear self-attention in the multi-task in-context setting. Since all in-context examples are aggregated uniformly, the resulting prediction does not respect task boundaries. For each task, the prediction error is independent of the task order. While such a formulation is sufficient to analyze single-task in-context learning for linear regression~\citep{pmlr-v202-li23l,ahn2023transformers,zhang2024trained}, it fails to capture phenomena that arise in multi-task prompts. Furthermore, our experiments in Section~\ref{exp:m} demonstrate that task order can have a significant impact on generalization error, which cannot be explained by linear self-attention.
\end{remark}

This observation motivates the need for a more expressive attention mechanism that can selectively control cross-task information. In the following sections, we introduce masked linear self-attention to address this limitation and enable a principled analysis of generalization and task interference.
\begin{theorem}[Prediction Error for Task $t$]
\label{thm-generalization}
Consider $T$ regression tasks. For task $t$, the model observes $\{(x_{t,i},y_{t,i})\}_{i=1}^M$ with masked linear self-attention, for $\Gamma = \Lambda + \frac{1}{N}\Lambda + \frac{1}{N}tr(\Lambda)I_d$, the model prediction can be represented as:
\[
\hat{y}_{t,q}
= x_{t,q}^\top\Gamma^{-1}
\Big(\beta_{t}\sum_{s=1}^t S_s\Big),
\]
where $S_s=\sum_{i=1}^M x_{s,i}y_{s,i},\:\beta_{t}=\frac{1}{t(M+1)}$. Then the prediction error satisfies
\[
\begin{aligned}
\mathbb{E}[(\hat y_{t,q}-y_{t,q})^2]
=&\min_{w\in\mathbb{R}^d} \mathbb{E}(w^\top x_{t,q} - y_{t,q})^2 \\&+\frac{M}{t^2(M+1)^2}\,\sum_{s=1}^t\mathrm{tr}(\Sigma_s\,\Gamma^{-2}\Lambda)\\
+&\sum_{i=1}^d \frac{1}{\lambda_i}\Big(\frac{\lambda_i(\alpha s_i - m_i) - m_i\frac{\lambda_i + \operatorname{tr}(\Lambda)}{N}}{\lambda_i + \frac{\lambda_i + \operatorname{tr}(\Lambda)}{N}} \Big)^2
\end{aligned}
\]
where $\alpha=\frac{M}{t(M+1)},s_i = v_i^\top\sum_{s=1}^t\mu_s,m_i = v_i^\top\mu_t$ and $ \Lambda =\sum_{i=1}^d\lambda_i v_iv_i^\top.$
\end{theorem}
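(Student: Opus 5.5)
Taking the prediction formula as given, I would treat the error as a computation with three sources of randomness: the query $x_{t,q}$, the in-context samples $\{x_{s,i}\}$ for $s\le t$, and the fixed task vectors $w_s$. The formula itself comes from the trained masked-attention parameters: as in the proof of Lemma~\ref{prediction}, the relevant block of $W$ and $V$ acts as $\Gamma^{-1}$. The pretraining optimum for $N$-sample prompts with Gaussian features gives $\Gamma=\Lambda+\frac1N\Lambda+\frac1N\mathrm{tr}(\Lambda)I_d$, using the fourth-moment identity $\mathbb{E}[xx^\top A xx^\top]=2\Lambda A\Lambda+\mathrm{tr}(A\Lambda)\Lambda$. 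The causal mask's $1/j$ normalization, evaluated at the query column of task $t$, which sits at position $t(M+1)$, contributes $\beta_t=\frac{1}{t(M+1)}$. Only the columns of tasks $1,\dots,t$ survive. The query column itself contributes nothing because its label entry is $0$.

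Next, write $\hat w:=\Gamma^{-1}\beta_t\sum_{s\le t}S_s$. Since $x_{t,q}$ is independent of $\hat w$,
\[
\mathbb{E}(\hat y_{t,q}-y_{t,q})^2=\mathbb{E}\,(\hat w-w_t)^\top\Lambda(\hat w-w_t).
\]
Let $\bar w:=\mathbb{E}\hat w=\Gamma^{-1}\alpha\sum_{s\le t}\mu_s$, where $\alpha=M\beta_t$. Then I use the standard split
\[
\mathbb{E}(\hat w-w_t)^\top\Lambda(\hat w-w_t)=\mathbb{E}(\hat w-\bar w)^\top\Lambda(\hat w-\bar w)+(\bar w-w_t)^\top\Lambda(\bar w-w_t).
\]
The first term is a variance term. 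The $S_s$ are independent sums of $M$ i.i.d. vectors with covariance $\Sigma_s$, so the first term equals $\beta_t^2 M\sum_s\mathrm{tr}(\Gamma^{-1}\Sigma_s\Gamma^{-1}\Lambda)$. Since $\Gamma$ and $\Lambda$ commute (both are functions of $\Lambda$), this is $\frac{M}{t^2(M+1)^2}\sum_s\mathrm{tr}(\Sigma_s\Gamma^{-2}\Lambda)$.

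The remaining piece is the bias term. The labels are noiseless, so $\min_w\mathbb{E}(w^\top x_{t,q}-y_{t,q})^2=0$, and that term in the statement is harmless. It is kept only to match the generic excess-risk form. Note that $\mu_t=\Lambda w_t$, so $w_t=\Lambda^{-1}\mu_t$. I then diagonalize in the eigenbasis $v_i$ of $\Lambda$. Here $\Gamma$ has eigenvalues $\gamma_i=\lambda_i+\frac{\lambda_i+\mathrm{tr}(\Lambda)}{N}$. The $i$-th coordinate of $\bar w-w_t$ is
\[
\frac{\alpha s_i}{\gamma_i}-\frac{m_i}{\lambda_i}=\frac{\lambda_i\alpha s_i-m_i\gamma_i}{\lambda_i\gamma_i}=\frac{\lambda_i(\alpha s_i-m_i)-m_i\frac{\lambda_i+\mathrm{tr}(\Lambda)}{N}}{\lambda_i\gamma_i}.
\]
Weighting by $\lambda_i$ and squaring gives $\frac1{\lambda_i}\big(\frac{\cdots}{\gamma_i}\big)^2$, which matches the third term exactly.

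The main obstacle is the first step: rigorously identifying the trained masked-attention parameters, namely $\Gamma$ and the effective $\beta_t$. This requires specifying the pretraining objective (single-task prompts of length $N$), computing the Gaussian fourth moments, and verifying which blocks of $W$ and $V$ are nonzero at the optimum. I would import this from the single-task analysis of~\citet{zhang2024trained}, already used for Lemma~\ref{prediction}, and only track how the mask changes the normalization. After that, the rest is a routine bias–variance calculation in the eigenbasis of $\Lambda$.
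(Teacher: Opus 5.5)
Your proposal is correct and follows essentially the same route as the paper: take the converged parameters from Theorem~\ref{limit}, read off $\beta_t=\frac{1}{t(M+1)}$ from the causal-mask normalization at column $t(M+1)$, and split the error into the variance term $\frac{M}{t^2(M+1)^2}\sum_{s=1}^t\mathrm{tr}(\Sigma_s\Gamma^{-2}\Lambda)$ plus the bias quadratic form $(\Gamma^{-1}\alpha\sum_{s}\mu_s-\Lambda^{-1}\mu_t)^\top\Lambda(\Gamma^{-1}\alpha\sum_{s}\mu_s-\Lambda^{-1}\mu_t)$, evaluated in the eigenbasis of $\Lambda$. The only cosmetic difference is in the split: the paper decomposes around the best linear predictor $a=\Lambda^{-1}\mathbb{E}[x_{t,q}y_{t,q}]$ and shows three cross terms vanish, keeping the $\min_w$ term in generic excess-risk form, whereas you use the noiseless labels to set that term to zero and split only around $\mathbb{E}\hat w$.
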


\noindent\textbf{Prediction Error of Task $t$.} 
Theorem~\ref{thm-generalization} provides an explicit characterization of the generalization performance for a specific task $t$, measured by the expected squared prediction error on a fresh query input $x_{t,q}$. We can separate the error into three parts. The first term represents the model's optimal linear irreducible error. The second term stems from the variance introduced by finite samples during inference, which decreases as $M$ increases. The third term is the bias, measuring the deviation between different tasks. When $t=1$, the bias term reduces to the single-task setting, recovering existing theoretical results on in-context learning for linear regression. 

In contrast to linear self-attention, this prediction yields a task-dependent weight $\beta_t$, which enables order-sensitive generalization behavior and lays the foundation for our subsequent analysis of forgetting. Based on Theorem~\ref{thm-generalization}, we will provide insights on the following aspects.

\noindent \textbf{(1) \textit{Context Length.}} \textbf{Long context improves generalization only beyond a task-alignment threshold.}
When the length of prompts seen during training $N$ is large, $\Gamma^{-1} \approx \Lambda^{-1}$, considering the simplest case that $\Lambda = I_d$, the second error term will be 
\begin{equation*}
    \frac{M}{t^2(M+1)^2}\sum_{s=1}^t \mathrm{tr}(\Sigma_s\,\Gamma^{-2}\Lambda)
    = \frac{(d+1)M}{t^2(M+1)^2}\sum_{s=1}^t \|w_s\|_2^2 .
\end{equation*}
If we take expectation on $w_i\sim N(0,I_d)$, it is further simplified to $\frac{d(d+1)M}{t(M+1)^2}$. This analysis highlights the role of context length in shaping in-context learning performance, consistent with recent theoretical and empirical studies on ICL~\citep{zhang2024trained,kang2025incontextlearningperformcontinual,Lu_2025}. 

In the single-task setting, a larger $M$ enhances ICL capability and reduces error. In multi-task scenarios, increasing $M$ theoretically does reduce error. However, as we will see in the remaining analysis of task similarity, when tasks are dissimilar, blindly increasing the context length for individual tasks can lead to negative effects. As a result, the benefit of longer context emerges only beyond a certain threshold, whereas below this threshold, additional context can mislead the model and degrade generalization.

\noindent \textbf{(2) \textit{Task Similarity.}} \textbf{Context helps only when historical tasks are aligned with the target task.} Similarly, when both the test prompt length $M$ and the prompt length observed during training $N$ are large, we have $\Gamma^{-1} \approx \Lambda^{-1}$. 
Under this regime, the bias term in Theorem~\ref{thm-generalization} simplifies to
{\small
\begin{equation*}
    \sum_{i=1}^d \frac{1}{\lambda_i}
    \Big(
    \frac{\lambda_i(\alpha s_i - m_i) - m_i\frac{\lambda_i + \operatorname{tr}(\Lambda)}{N}}
         {\lambda_i + \frac{\lambda_i + \operatorname{tr}(\Lambda)}{N}}
    \Big)^2
    = \Big\|\frac{1}{t}\sum_{s=1}^t w_s - w_t\Big\|_2^2 .
\end{equation*}
}
When tasks are similar, i.e., $\mu_s$ is close to $\mu_t$, the $\alpha s_i$ is close to $m_i$, rendering the bias term small. In this regime, the overall prediction error is dominated by the variance term, and increasing the number of in-context examples $M$ yields clear performance gains.

In contrast, when tasks differ significantly and historical information is misaligned with the target task, the bias term dominates the error. In this case, simply increasing the context length $M$ does not necessarily improve generalization; instead, it can introduce additional bias.

\noindent \textbf{(3) \textit{Training Samples.}} \textbf{More training data does not always imply better in-context generalization.}
The $\tfrac{1}{N}$ term in $\Gamma$ acts as an implicit regularization mechanism. 
By adjusting $N$, the model trades off between historical information to reduce bias and suppressing incorrect transfer caused by task discrepancies, which manifests as increased variance and overfitting to cross-task noise.

When $N$ is sufficiently large ($N \to \infty$), we have $\Gamma \approx \Lambda$, and the bias term is primarily governed by the discrepancy $\alpha s_i - m_i$. 
In contrast, as $N \to 0^+$, the bias term converges to $\sum_i m_i^2 / \lambda_i$, which depends solely on the target task component and corresponds to ignoring historical information altogether. 
This behavior indicates that, under our setting, increasing $N$ is not monotonically beneficial.

Indeed, for each coordinate $i$, one can derive an analytical stationary point at which the bias vanishes,
\[
N_i^* = \frac{m_i\big(\lambda_i + \mathrm{tr}(\Lambda)\big)}{\lambda_i(\alpha s_i - m_i)} .
\]
These stationary points depend on multiple factors, including the input distribution and the alignment between tasks. 
However, such quantities are typically unknown in advance during training, making it difficult to determine an optimal number of training samples from theoretical considerations alone.

\subsection{Forgetting}
\label{sec:forgetting}
In the following subsection, we provide the interference expression for forgetting and analyze how interference from past and future tasks leads to forgetting in in-context continual learning.
\begin{theorem}[Interference Between Task-$t$ Query and Final Query]
\label{forgetting}
After all $T$ tasks, the final prediction is
\[
\hat{Y}_{t,q}
= x_{t,q}^\top\Gamma^{-1}
\Big(
\frac{1}{T(M+1)+1}
\sum_{s=1}^{T} S_s
\Big).
\]
The difference from the task-$t$ prediction can be written as
\[
\hat{Y}_{t,q}-\hat{y}_{t,q}
=
x_{t,q}^\top\Gamma^{-1}
\Big(
\sum_{i=1}^t c_t S_i
+\sum_{i=t+1}^T d S_i
\Big),
\]
where
\[
c_t=-\frac{(T-t)M + (T+1-t)}{t(M+1)(TM+T+1)},
\qquad
d=\frac{1}{TM+T+1}.
\]
Define
\[
\alpha_i(t)=
\begin{cases}
c_t, & i\le t,\\[4pt]
d, & i> t.
\end{cases}
\]

We obtain the interference error
\[
\begin{aligned}
\mathbb{E}[(\hat{Y}_{t,q}-\hat y_{t,q})^2]
=&\sum_{i=1}^T \alpha_i^2(t)\,\mathrm{tr}\!\big[(M\Sigma_i+M^2\mu_i\mu_i^\top)\Gamma^{-2}\Lambda\big]
\\&+M^2\!\sum_{\substack{i,j=1\\i\ne j}}^T\alpha_i(t)\alpha_j(t)\,\mathrm{tr}(\mu_i\mu_j^\top\Gamma^{-2}\Lambda). 
\end{aligned}
\]
\end{theorem}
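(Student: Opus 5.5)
The plan has three stages. First, obtain the final prediction from the masked linear self-attention formula used in Theorem~\ref{thm-generalization}. Second, subtract the two predictions and collect coefficients. Third, compute the second moment by conditioning on the query and expanding the sums $S_i$.

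\textbf{Final prediction.} I would reuse the derivation behind Theorem~\ref{thm-generalization}. At the final query column, the causal mask divides by that column's position. After $T$ blocks of $M+1$ columns, the final query sits at position $T(M+1)+1$. All earlier in-context pairs lie before this column, while the zero labels on the query tokens contribute nothing to the $xy$ aggregate. Plugging the same trained $W,V$ (which produce $\Gamma^{-1}$) into this formula gives
\[
\hat Y_{t,q}=x_{t,q}^\top\Gamma^{-1}\frac{1}{T(M+1)+1}\sum_{s=1}^T S_s .
\]

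\textbf{Difference of predictions.} Subtracting $\hat y_{t,q}=x_{t,q}^\top\Gamma^{-1}\beta_t\sum_{s\le t}S_s$ gives coefficient $d=\frac{1}{TM+T+1}$ on $S_i$ for $i>t$. For $i\le t$ the coefficient is $d-\frac{1}{t(M+1)}$. Putting this over the common denominator $t(M+1)(TM+T+1)$ yields the numerator $t(M+1)-T(M+1)-1=-[(T-t)M+(T+1-t)]$, which is exactly $c_t$. This step is pure algebra.

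\textbf{Second moment.} Write $\Delta=\sum_i\alpha_i(t)S_i$. Since $x_{t,q}$ is independent of all in-context samples and $x_{t,q}\sim\mathcal N(0,\Lambda)$, conditioning on $\Delta$ gives
\[
\mathbb E[(x_{t,q}^\top\Gamma^{-1}\Delta)^2]=\mathbb E[\Delta^\top\Gamma^{-1}\Lambda\Gamma^{-1}\Delta]=\mathrm{tr}(\Gamma^{-1}\Lambda\Gamma^{-1}\mathbb E[\Delta\Delta^\top]).
\]
Because $\Gamma$ and $\Lambda$ commute ($\Gamma$ is a polynomial in $\Lambda$), this equals $\mathrm{tr}(\mathbb E[\Delta\Delta^\top]\Gamma^{-2}\Lambda)$.

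Expanding $\mathbb E[\Delta\Delta^\top]=\sum_{i,j}\alpha_i\alpha_j\mathbb E[S_iS_j^\top]$ splits into two cases.
\begin{itemize}
\item For $i\neq j$: the tasks' samples are independent (conditional on the $w$'s, treated as fixed), so $\mathbb E[S_iS_j^\top]=M^2\mu_i\mu_j^\top$.
\item For $i=j$: $S_i$ is a sum of $M$ i.i.d.\ terms with mean $\mu_i$ and covariance $\Sigma_i$, so $\mathbb E[S_iS_i^\top]=M\Sigma_i+M^2\mu_i\mu_i^\top$.
\end{itemize}
Substituting both cases and using linearity of trace gives the stated formula.

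\textbf{Main obstacle.} The delicate point is the first stage: justifying the normalization $\frac{1}{T(M+1)+1}$ and confirming that the trained parameters yield the same $\Gamma^{-1}$ at the final query as at per-task queries. In particular, the query tokens (with zero labels) appearing in earlier blocks must be handled correctly. Here I would rely on the structure established in the proof of Theorem~\ref{thm-generalization}, namely that only the $xy$ cross block of $\mathbf{P}\,\mathrm{mask}(\cdot)$ enters the prediction, and on position-based normalization. The rest is bookkeeping, including the implicit conditioning on the fixed $w_t$'s.
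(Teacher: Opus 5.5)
Your proposal is correct and follows essentially the same route as the paper. Both arguments read off $\hat Y_{t,q}$ from the causal-mask normalization $1/(T(M+1)+1)$, subtract to obtain the coefficients $c_t$ and $d$, use iterated expectation over $x_{t,q}$ together with the commutation of $\Gamma$ and $\Lambda$ to reduce to $\mathrm{tr}(\mathbb{E}[BB^\top]\Gamma^{-2}\Lambda)$, and expand with the moment identity for $\mathbb{E}[S_iS_j^\top]$. Your common-denominator computation of $c_t$ is also right; the denominator $t(M+1)(T+M+1)$ in the appendix's proof is a typo for $t(M+1)(TM+T+1)$.
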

\begin{remark}
Theorem~\ref{forgetting} shows that forgetting arises from the \textbf{reweighting of task contributions} rather than loss of information about task $t$ itself.
Past tasks ($i\le t$) receive negative coefficients $c_t$, while future tasks ($i>t$) receive positive coefficients $d$.
This demonstrates that representations of task $t$ are actively offset by other tasks, reflecting an intrinsic structural property of in-context continual learning.
\end{remark}

Furthermore, the above expression reveals two qualitatively distinct sources of interference:
\begin{itemize}
\item \emph{intra-task terms}, captured by $\mathrm{tr}(\Sigma_i\Gamma^{-2}\Lambda)$ and $\mathrm{tr}(\mu_i\mu_i^\top\Gamma^{-2}\Lambda)$, which reflect estimation noise within each task;
\item \emph{inter-task mean interaction terms}, given by $\mathrm{tr}(\mu_i \mu_j^\top\Gamma^{-2}\Lambda)$ for $i\neq j$, which quantify interference induced by misalignment across different tasks.
\end{itemize}
Based on Theorem~\ref{thm-generalization}, we will provide insights on the following aspects.

\noindent \textbf{(1) \textit{Context Length.}} \textbf{Increasing context length reduces variance-induced interference, but forgetting persists due to irreducible mean misalignment across tasks.}
Although the variance-related terms scale linearly with the context length $M$, they are multiplied by coefficients $\alpha_i^2(t)=O(1/M^2)$, resulting in an overall contribution that decays as $O(1/M)$.
This implies that increasing context length effectively suppresses estimation noise within each task.

In contrast, all mean-related terms scale as $M^2$ but are weighted by $\alpha_i(t)\alpha_j(t)=O(1/M^2)$.
Consequently, their contributions remain bounded and converge to constants as $M$ grows.
These limiting values depend on the relative alignment between task means and the task ordering.

This separation highlights a fundamental distinction: while longer contexts improve intra-task estimation accuracy, they cannot eliminate interference arising from task mean misalignment.
As a result, forgetting persists asymptotically even in the long-context regime.

\noindent \textbf{(2) \textit{Task Similarity.}} \textbf{Forgetting is amplified by learning future tasks that are statistically similar to the current one.}
Theorem~\ref{forgetting} reveals that the severity of forgetting is governed by the alignment among task means $\{\mu_i\}$. When task $t$ is similar to subsequent tasks, the interaction term $\mathrm{tr}(\mu_i\mu_j^\top\Gamma^{-2}\Lambda)$ becomes large, leading to strong interference. In contrast, if future tasks are statistically orthogonal to task $t$, their contribution to the interference error is negligible.

This observation indicates that forgetting in in-context continual learning is inherently task-dependent: learning future tasks aligned with past ones amplifies forgetting, whereas learning statistically irrelevant tasks can largely preserve prior performance.

\noindent \textbf{(3) \textit{Task Ordering.}} \textbf{Forgetting is inherently order-dependent, even when the set of tasks remains unchanged.} Since the mixed interaction terms\footnote{For details on which specific task interactions correspond to different coefficients, please refer to Corollary~\ref{d_forgetting}} are scaled by $c_t/(TM+T+1), c_t^2$, their contribution increases with both the number of tasks and the context length. This implies that forgetting is influenced not only by task similarity, but also by the ordering of tasks.

In particular, sequences in which later tasks are misaligned with earlier ones induce stronger inter-task interference. Theorem~\ref{forgetting} therefore provides a precise mathematical explanation for the order-dependent forgetting phenomena commonly observed in in-context continual learning.

\begin{corollary}
\label{aver}
Let $A_i \triangleq \mathrm{tr}\!\big[(M\Sigma_i+M^2\mu_i\mu_i^\top)\Gamma^{-2}\Lambda\big]$ and $B_{ij} \triangleq M^2\,\mathrm{tr}(\mu_i\mu_j^\top\Gamma^{-2}\Lambda).$
Recall Theorem~\ref{forgetting} for each $t$, we have $\alpha_i(t)$.
For convenience define
\[
\bar{\alpha}_i \triangleq \frac{1}{T}\sum_{t=1}^T \alpha_i(t)^2,
\qquad
\bar{\alpha} _{ij}\triangleq \frac{1}{T}\sum_{t=1}^T \alpha_i(t)\alpha_j(t),\quad i\ne j.
\]

Then the average interference error takes
\[
\frac{1}{T}\sum_{t=1}^T 
\mathbb{E}\big[(\hat{Y}_{t,q}-\hat y_{t,q})^2\big]
=
\sum_{i=1}^T \bar{\alpha}_i\,A_i
+
\sum_{\substack{i,j=1\\i\ne j}}^T \bar{\alpha} _{ij}\,B_{ij}.
\]

Moreover, $S_i$ and $S_{ij}$ admit explicit forms.  
For each $1\le i\le T$,
\[
\bar{\alpha}_{i}=\frac{1}{T}\Big((i-1)d^2 + \sum_{t=i}^T c_t^2\Big).
\]

For $i<j$,
\[
\bar{\alpha}_{ij}
=\frac{1}{T}\Big((i-1)d^2 
+ \sum_{t=i}^{j-1} c_t d
+ \sum_{t=j}^T c_t^2\Big),
\]
and for $i>j$, symmetry gives $\bar{\alpha} _{ij} = \bar{\alpha} _{ji}$.
\end{corollary}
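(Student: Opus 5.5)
The plan is to average the identity of Theorem~\ref{forgetting} over $t\in[T]$ and then evaluate the averaged coefficients by counting cases. First, rewrite the interference error of Theorem~\ref{forgetting} in the notation of the corollary:
\[
\mathbb{E}\big[(\hat{Y}_{t,q}-\hat y_{t,q})^2\big]=\sum_{i=1}^T \alpha_i(t)^2 A_i+\sum_{i\ne j}\alpha_i(t)\alpha_j(t)B_{ij}.
\]
This is exact for each fixed $t$. The key observation is that $A_i$ and $B_{ij}$ do not depend on $t$: they involve only $\Sigma_i,\mu_i,\mu_j,\Gamma,\Lambda$ and $M$. All $t$-dependence therefore sits in the scalar coefficients $\alpha_i(t)$. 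Summing over $t$, dividing by $T$, and exchanging the finite sums over $t$ and over $(i,j)$ immediately gives the first display, with $\bar\alpha_i$ and $\bar\alpha_{ij}$ as defined.

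Second, compute $\bar\alpha_i$ explicitly by splitting the range of $t$ according to the case definition of $\alpha_i(t)$. For fixed $i$:
\begin{itemize}
\item if $t<i$ (that is, $i>t$), then $\alpha_i(t)=d$, and there are exactly $i-1$ such values of $t$;
\item if $t\ge i$, then $\alpha_i(t)=c_t$.
\end{itemize}
Hence $\sum_t \alpha_i(t)^2=(i-1)d^2+\sum_{t=i}^T c_t^2$, which is the claimed formula. Note that $d$ is independent of $t$ while $c_t$ is not, which is why the second part stays as a sum.

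Third, for $i<j$ split $[T]$ into three blocks:
\begin{itemize}
\item $t\le i-1$: both indices exceed $t$, so the product is $d^2$, contributing $(i-1)d^2$;
\item $i\le t\le j-1$: $i\le t<j$, so the product is $c_t d$;
\item $t\ge j$: both indices are at most $t$, so the product is $c_t^2$.
\end{itemize}
Summing the three blocks gives the stated expression for $\bar\alpha_{ij}$. The case $i>j$ follows because $\alpha_i(t)\alpha_j(t)$ is symmetric in $(i,j)$ for every $t$, so $\bar\alpha_{ij}=\bar\alpha_{ji}$.

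There is no genuine analytical obstacle, since all the probabilistic work is already contained in Theorem~\ref{forgetting}. The only point needing care is the boundary bookkeeping: whether $t=i$ falls in the "past" case ($i\le t$ gives $c_t$) or the "future" case, and the corresponding endpoints $i-1$, $j-1$ of the blocks. I would check these endpoints against the small cases $T=1$ and $T=2$ to confirm the counts.
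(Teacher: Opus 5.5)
Your proposal is correct and follows the paper's proof essentially step for step. Like the paper, you rewrite Theorem~\ref{forgetting} in terms of the $t$-independent quantities $A_i, B_{ij}$, average over $t$, and exchange the finite sums. You then evaluate $\bar\alpha_i$ and $\bar\alpha_{ij}$ by splitting $t$ into the blocks $t<i$, $i\le t<j$, $t\ge j$, and use the symmetry of $\alpha_i(t)\alpha_j(t)$ for the case $i>j$.
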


While the previous results characterize interference at a fixed task-query pair, continual learning typically evaluates forgetting by averaging performance across tasks. The corollary provides the average forgetting in ICL. The averaged error aligns with the standard definition of forgetting in continual learning~\cite{pmlr-v178-evron22a,pmlr-v202-lin23f}, where performance degradation is evaluated by aggregating over tasks rather than conditioning on a specific task.

The different coefficients $\bar{\alpha}_{i}, \bar{\alpha}_{ij}$ reveal that task ordering influences forgetting not only at specific query positions but also in an average sense across all tasks. Unlike $\alpha_i(t)$, which depends on a particular query task $t$, these averaged coefficients are independent of the query position and thus quantify the typical contribution of each task to forgetting across the entire learning process, indicating that interference is a global property of in-context continual learning.

\subsection{Summary of Generalization and Forgetting Insights}
\label{sec:summary}

The preceding analysis of masked linear self-attention in in-context continual learning allows us to summarize key factors affecting generalization and forgetting.

\begin{proposition}[Summary of In-Context Continual Learning Dynamics]
\label{prop:summary}
Let $M$ be the number of in-context examples per task, $N$ the prompt length seen during training, and $\{\mu_i\}$ the task means. We summarize the effects of various factors on generalization and forgetting in Table~\ref{T:iccl}.

\noindent \textbf{Remark.}  
This proposition consolidates the insights from Theorems~\ref{thm-generalization} and~\ref{forgetting}, as well as Corollary~\ref{d_forgetting}. 
Generalization errors decompose into irreducible, variance, and bias terms, which are affected by context length, task similarity, and training prompt length.  
Forgetting emerges structurally from reweighting of task contributions, and persists asymptotically when task means are misaligned, regardless of context length.
\end{proposition}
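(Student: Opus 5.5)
This proposition is a consolidation statement: each cell of Table~\ref{T:iccl} is a qualitative claim. I will prove it by reading each entry off the explicit formulas already established, namely Theorem~\ref{thm-generalization}, Theorem~\ref{forgetting} and Corollary~\ref{aver}. No new probabilistic computation is needed. The work is an asymptotic inspection of coefficients in $M$, $N$ and $t$, together with the sign and ordering structure of $\alpha_i(t)$.

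\textbf{Generalization column.} I will start from the three-term decomposition in Theorem~\ref{thm-generalization}.

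\emph{Context length.} The variance term carries the prefactor $\frac{M}{t^2(M+1)^2}=O(1/(t^2M))$, while $\mathrm{tr}(\Sigma_s\Gamma^{-2}\Lambda)$ does not depend on $M$. The variance therefore decreases monotonically in $M$, which gives the context-length entry.

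\emph{Task similarity.} I will take $N\to\infty$, so $\Gamma\to\Lambda$ and $\alpha\to 1/t$. The bias then becomes $\sum_i (\alpha s_i-m_i)^2/\lambda_i$. Using $\mu_s=\Lambda w_s$ (since $\mathbb{E}[xx^\top]w_s=\Lambda w_s$), this equals $\|\frac1t\sum_{s\le t}w_s-w_t\|_2^2$ in the $\Lambda$-geometry, which gives the task-similarity entry.

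\emph{Training samples.} I will note that $N$ enters only through the shrinkage factor $\frac{\lambda_i+\mathrm{tr}\Lambda}{N}$. Comparing the limits $N\to\infty$ (bias governed by $\alpha s_i-m_i$) and $N\to 0^+$ (bias tends to $\sum_i m_i^2/\lambda_i$) exhibits the bias/overfitting trade-off.

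\emph{Task ordering.} The prediction depends on $\{S_s\}_{s\le t}$ only through their sum. Order therefore matters only via which tasks precede $t$, which is the stated entry.

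\textbf{Forgetting column.} Here I will use the closed form in Theorem~\ref{forgetting}. The key estimates are $c_t=-\frac{(T-t)M+(T+1-t)}{t(M+1)(TM+T+1)}=\Theta(1/M)$ and $d=\Theta(1/M)$, both uniformly in $M$.

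\emph{Context length.} The $M\Sigma_i$ parts are weighted by $\alpha_i^2=O(1/M^2)$, so they contribute $O(1/M)$. The $M^2\mu_i\mu_j^\top$ parts are weighted by $O(1/M^2)$, so they converge to the constant $\lim_{M\to\infty}M^2\alpha_i\alpha_j\,\mathrm{tr}(\mu_i\mu_j^\top\Gamma^{-2}\Lambda)$. That constant is nonzero unless the mean-misalignment combination $\sum_i \lim_{M\to\infty}(M\alpha_i)\mu_i$ vanishes, so forgetting persists in the long-context limit.

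\emph{Task similarity.} The cross terms are proportional to $\mathrm{tr}(\mu_i\mu_j^\top\Gamma^{-2}\Lambda)$. These are large for aligned task means and vanish for orthogonal ones, which gives the task-similarity entry.

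\emph{Training samples.} $N$ enters only through $\Gamma$. As $N\to\infty$, $\Gamma^{-2}\Lambda\to\Lambda^{-1}$ becomes independent of $N$. For small $N$, the terms $\frac1N(\Lambda+\mathrm{tr}(\Lambda)I)$ dominate, so the training covariance affects the interference.

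\emph{Task ordering.} The ordering claim follows from the piecewise definition of $\alpha_i(t)$ and the explicit averaged coefficients $\bar\alpha_i,\bar\alpha_{ij}$ in Corollary~\ref{aver}. These coefficients are not invariant under permutations of the task indices.

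\textbf{Main obstacle.} The step requiring the most care is making the claim ``mean-induced interference remains constant for misaligned tasks'' rigorous. Showing that the limit exists is easy. Showing it is strictly positive requires writing the mean part as a quadratic form $\|\Lambda^{1/2}\Gamma^{-1}\sum_i M\alpha_i(t)\mu_i\|_2^2\ge 0$. One then identifies when it vanishes: essentially when the weighted average of past means equals the weighted average of future means. I will define ``misaligned'' as the failure of this identity, and the positivity follows.
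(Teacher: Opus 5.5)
Your overall strategy is the same as the paper's: you read each cell of Table~\ref{T:iccl} off the closed forms in Theorems~\ref{thm-generalization} and~\ref{forgetting}, and the paper's only justification is the discussion after those theorems. Your quadratic-form positivity argument for the long-context limit is sharper than anything the paper gives.

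\textbf{The gap: the task-similarity cell of the forgetting column.} You argue that the cross terms are proportional to $h_{ij}=\mathrm{tr}(\mu_i\mu_j^\top\Gamma^{-2}\Lambda)$, which is large for aligned means and zero for orthogonal ones. From this you conclude that future tasks similar to $t$ increase interference. This ignores the sign of the coefficient. For a pair $i\le t<j$ (task $t$ or a predecessor against a future task), Theorem~\ref{forgetting} gives the coefficient $2M^2\alpha_i(t)\alpha_j(t)=2M^2c_t d$, which is negative because $c_t<0<d$. So making a future task more aligned with task $t$ \emph{lowers} the interference error. An orthogonal future task does not contribute minimally either: its diagonal term $d^2\,\mathrm{tr}[(M\Sigma_j+M^2\mu_j\mu_j^\top)\Gamma^{-2}\Lambda]$ tends to $h_{jj}/T^2$, which is nonzero.

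Your own main-obstacle computation exposes the contradiction. Since $M\alpha_i(t)\to-\frac{T-t}{tT}$ for $i\le t$ and $M\alpha_i(t)\to\frac1T$ for $i>t$, the mean-induced interference tends to $\bigl(\tfrac{T-t}{T}\bigr)^2\bigl\|\Lambda^{1/2}\Gamma^{-1}(\bar\mu_{>t}-\bar\mu_{\le t})\bigr\|_2^2$. Here $\bar\mu_{\le t}$ and $\bar\mu_{>t}$ are the averages of the past and future means. This limit vanishes when future tasks match the past, and it grows as they become orthogonal and then anti-aligned. For $T=2$, $t=1$ it equals $\tfrac14\|\Lambda^{1/2}\Gamma^{-1}(\mu_2-\mu_1)\|_2^2$. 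So your last paragraph proves the opposite of what you claim for that cell.

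The paper's discussion after Theorem~\ref{forgetting} makes the same sign-blind reading, so a more careful argument cannot rescue this cell; the cell itself must be restated:
\begin{itemize}
\item similarity \emph{within} the past block, or \emph{within} the future block (coefficients $c_t^2, d^2>0$), amplifies forgetting;
\item similarity \emph{between} future tasks and task $t$ or its predecessors suppresses forgetting.
\end{itemize}

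\textbf{Smaller slips.}
\begin{itemize}
\item In the generalization task-similarity step, $\alpha\to1/t$ requires $M\to\infty$, not $N\to\infty$.
\item $c_T=\Theta(1/M^2)$, not $\Theta(1/M)$. This is harmless, since you only use $O(1/M)$ and the last task's forgetting vanishes anyway.
\item The training-samples trade-off in the generalization column needs more than the two bias limits. You also need that the variance term $\mathrm{tr}(\Sigma_s\Gamma^{-2}\Lambda)$ increases with $N$, because the eigenvalues $\lambda_i/(\lambda_i+(\lambda_i+\mathrm{tr}\Lambda)/N)^2$ are increasing in $N$.
\end{itemize}
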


\vspace{-0.1in}\section{Experiments}
In this section, we empirically validate our theoretical findings using more sophisticated transformer models. We study how context length, task similarity and task ordering affect in-context continual learning, and assess the extent to which our predictions about generalization and forgetting align with empirical performance.

\noindent \textbf{Experimental Setup.}
We utilize a standard decoder-style GPT-2 architecture consisting of 12 layers, 8 attention heads, and an embedding dimension of 256. Furthermore, to validate the scalability of our conclusions across different model sizes, we conduct additional verifications on a tiny GPT-2 configuration (3 layers, 2 heads, embedding dimension of 64). We evaluate these transformers on linear regression tasks with mean-zero Gaussian inputs and fixed identity covariance, following the training procedure of~\citet{garg2022what} where the model learns in-context learning from diverse task instances. 

Each task is defined by a task-specific weight vector $w \in \mathbb{R}^d$, and task identity is implicit, determined by example ordering in the prompt sequence. During inference, all model parameters remain fixed, and adaptation occurs solely through the attention mechanism. We generate multiple sequences of tasks and report mean squared error (MSE) on query points, including per-task prediction error, forgetting error and average interference error, averaged over multiple batches with standard deviations. Further experimental details, including model architectures, convergence losses, and comprehensive tabular results across various scales, are provided in Appendix~\ref{app:exp_details}.

\subsection{Effect of Context Length $M$}
\label{exp:m}
According to Theorem~\ref{thm-generalization}, the variance term should shrink for fixed task position $t$. Increasing $M$ also reduces estimation noise in the sample covariance, improving the effective estimator used by the attention mechanism. However, the bias term also reveals that beyond a certain regime, increasing context length can amplify systematic inter-task interference, potentially worsening generalization when task means are misaligned.

\begin{figure}[h]
\begin{center}
\centerline{
\begin{tabular}{ccc}
\includegraphics[width=0.5\columnwidth]{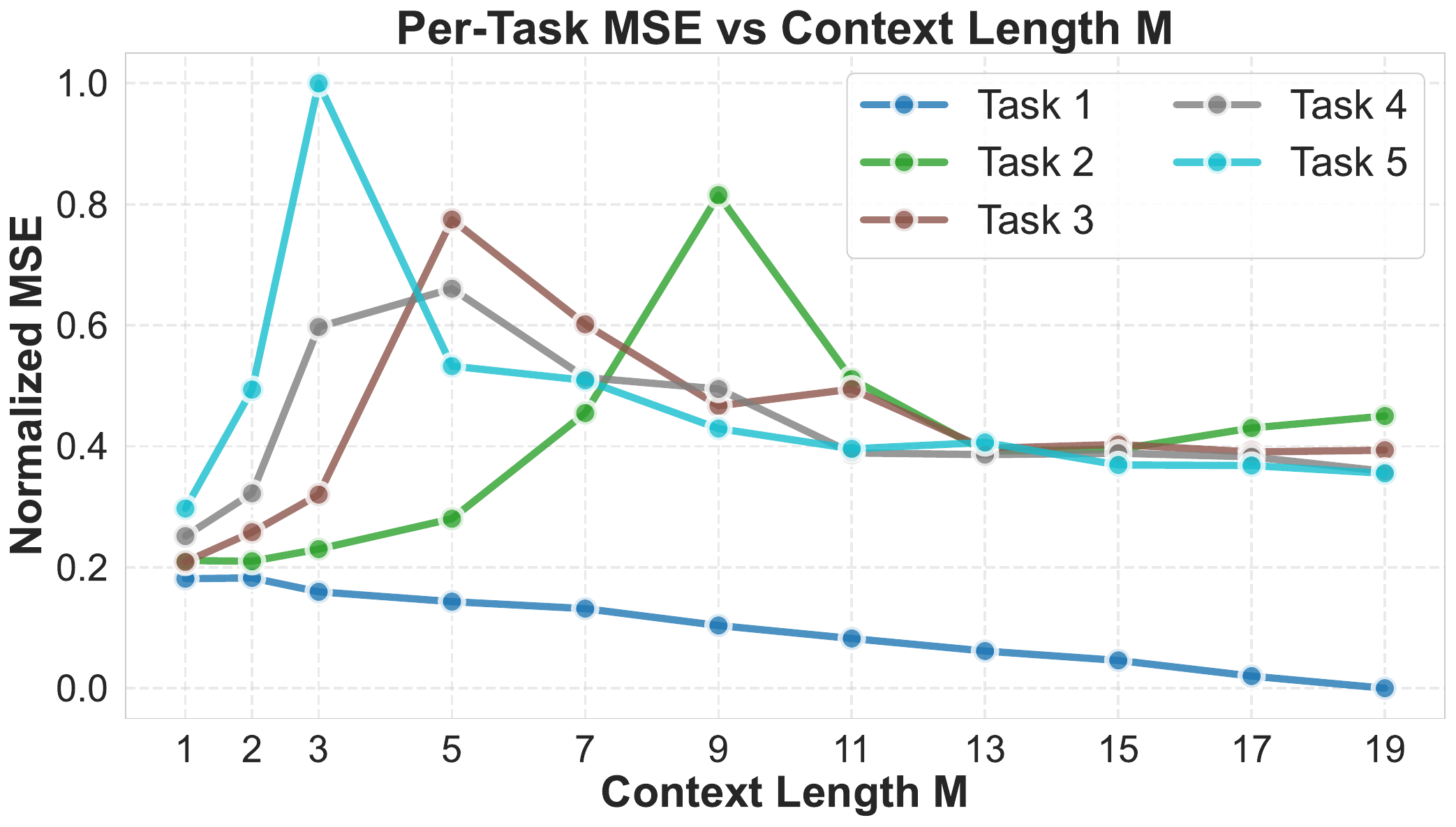}&
\includegraphics[width=0.5\columnwidth]{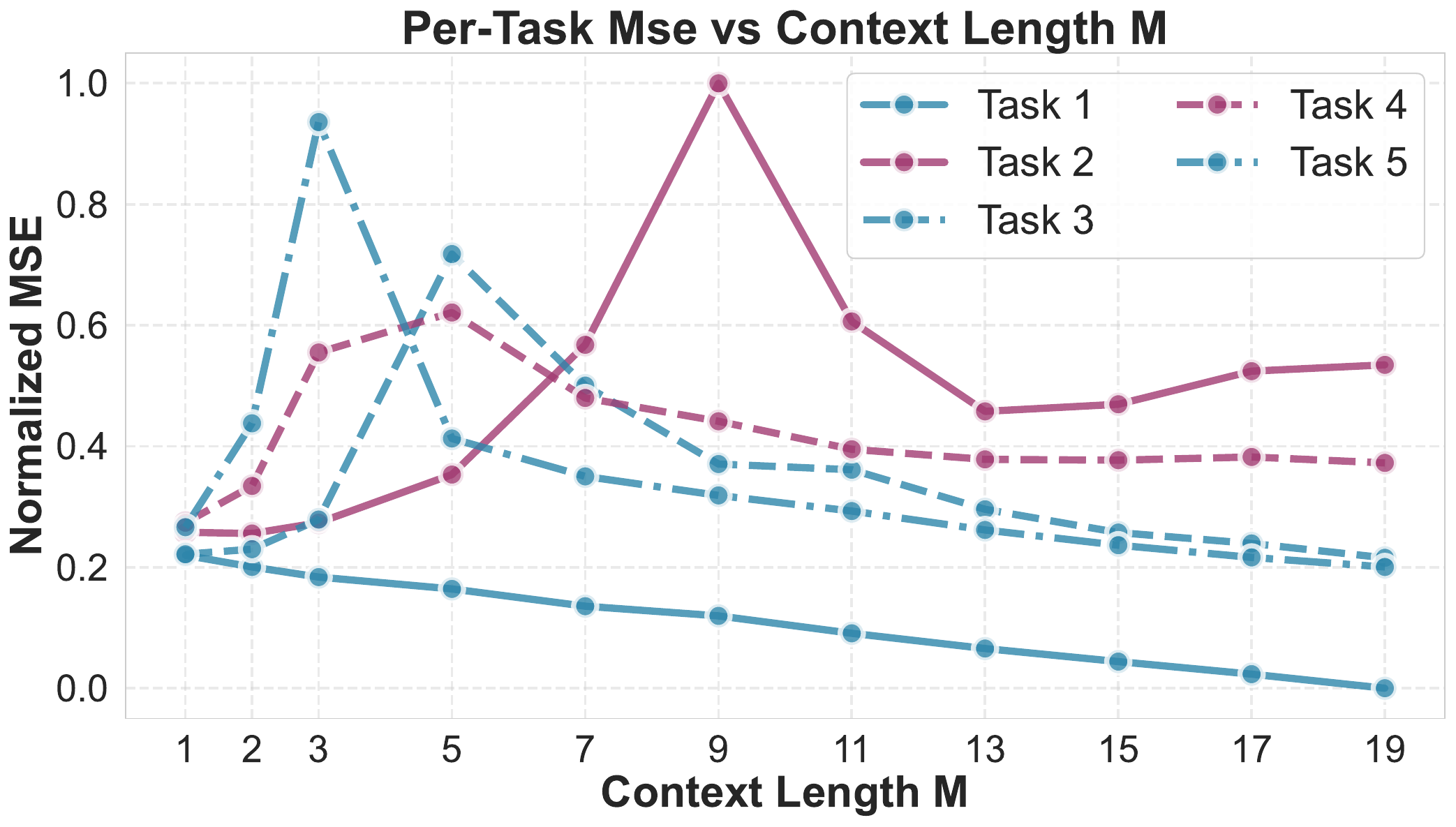}
\end{tabular}
}
\caption{Per-Task MSE vs Context Length M.}
\label{fig:M}
\end{center}
\vspace{-0.1in}
\end{figure}

\noindent \textbf{Results.} \emph{The effect of context length $M$ on prediction error is task-dependent, reflecting a trade-off between variance reduction and inter-task interference.} In the single-task setting (Task 1), increasing the number of in-context samples $M$ consistently reduces the prediction error, reflecting the standard variance reduction under finite-sample estimation. In the multi-task setting, however, the effect of $M$ is no longer monotonic (Figure~\ref{fig:M}). Prediction error results from a trade-off between variance reduction from additional in-context examples and bias induced by task dissimilarity and inter-task interference. When $M$ is small, interference dominates, and increasing $M$ can even increase error; as $M$ grows, variance reduction becomes dominant, yielding improved performance.

Consistent with our theoretical predictions, the variance term decreases with $M$ approximately as $\mathcal{O}(M/(M+1)^2)$ for larger $M$, whereas the interference term persists, depending on the alignment of task-specific parameters. This explains why the benefit of increasing $M$ eventually plateaus.

To systematically validate our theoretical bias-variance-interference decomposition (Theorem~\ref{thm-generalization}), we performed a quantitative comparison between the theoretically predicted MSE and the empirical MSE actually observed in the GPT-2 model. As detailed in Appendix~\ref{app:exp_details}, the structural inflection points are highly consistent despite architectural differences. For subsequent tasks, the empirical error exhibits a significant peak at intermediate context lengths (e.g., at $M=3$) before recovering. This directly validates our claim: when historical tasks are misaligned, expanding context initially injects systematic bias that overwhelms variance reduction.

Figure~\ref{fig:M} further illustrates the role of task similarity. Among the five tasks, $\{1,3,5\}$ form one similarity cluster and $\{2,4\}$ form another. Tasks 3, 4, and 5 benefit from their similarity to preceding tasks, resulting in lower error, a phenomenon reminiscent of data-replay methods in continual learning~\cite{rolnick2019experience,ding2026provable}.

\subsection{Task similarity}
While the previous experiment varied the amount of historical context, we now isolate the role of task similarity by holding the context length fixed. Task similarity directly controls the magnitude of the bias term: when tasks are well aligned, previous context reduces error through variance reduction, whereas misaligned tasks introduce systematic interference. To systematically control this, we parameterize the similarity by the angle $\theta$ between the task-specific weight vectors.

\begin{figure}[ht]
\vspace{-0.1in}
\begin{center}
\centerline{
\begin{tabular}{cc}
\includegraphics[width=0.45\columnwidth]{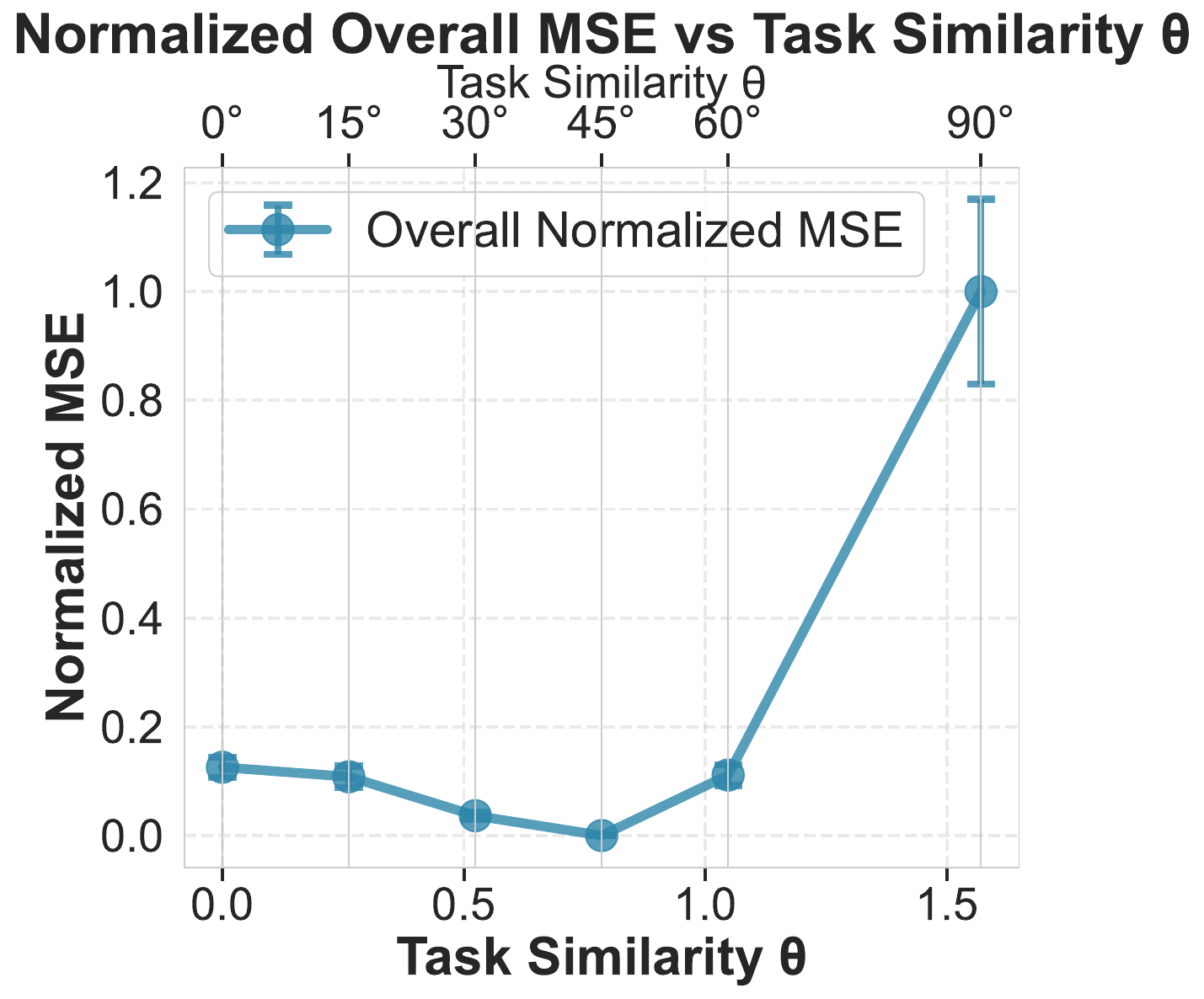}&
\includegraphics[width=0.45\columnwidth]{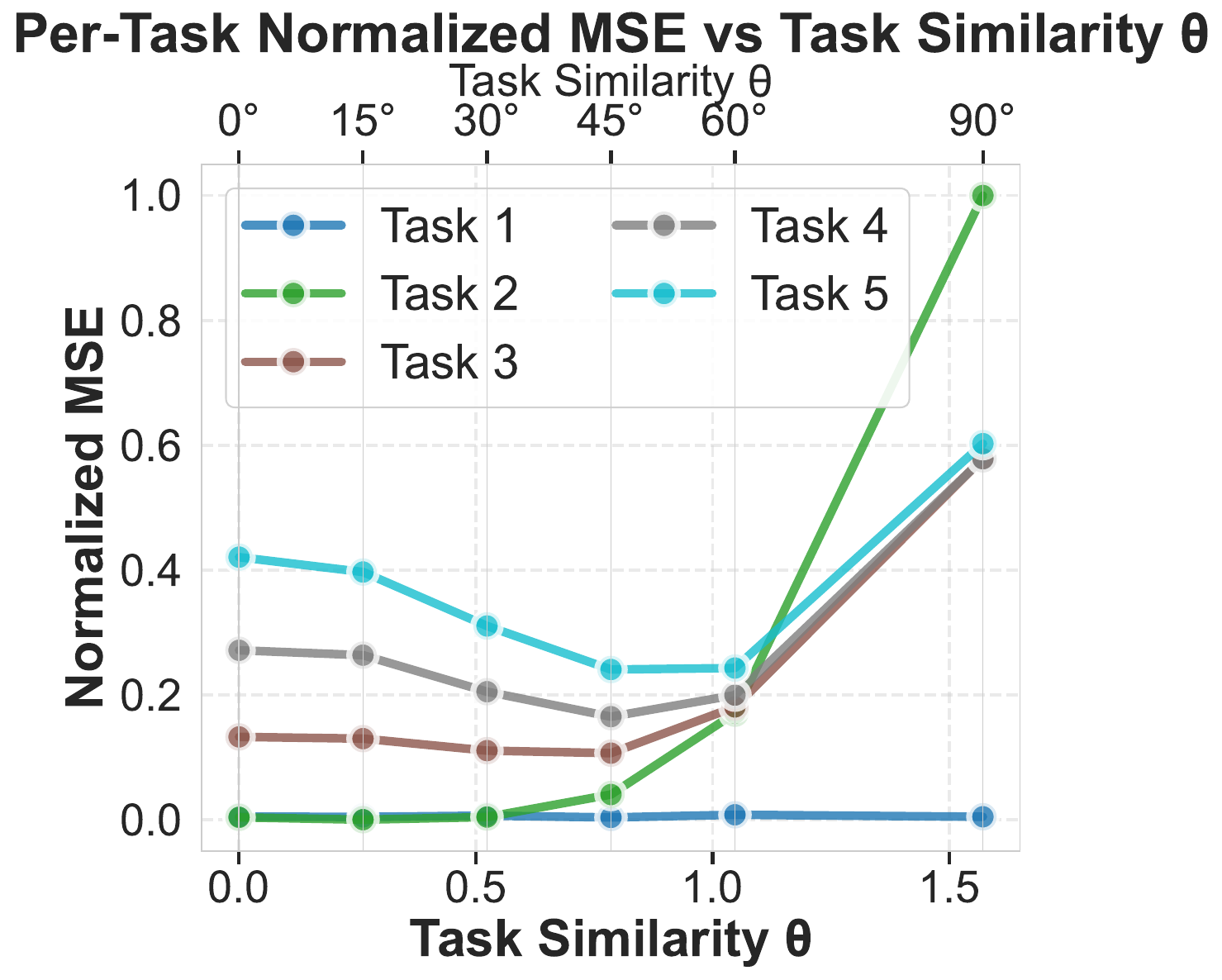}
\end{tabular}
}
\caption{Per-Task MSE vs Task Similarity.}
\label{fig:M}
\end{center}
\vspace{-0.1in}
\end{figure}

\noindent \textbf{Results.}
Our results clearly validate the theoretical bias-variance trade-off. When $\theta$ is small, corresponding to high task similarity, historical tasks provide positive transfer and reduce prediction error. As $\theta$ increases, the bias term grows and systematic interference dominates, leading to negative transfer where incorporating dissimilar historical tasks increases the MSE. These results demonstrate that task similarity is a decisive factor in determining whether historical context is beneficial or harmful, consistent with the bias characterization in Theorem~\ref{thm-generalization}.

Furthermore, this controlled setting provides a crucial theoretical bridge to understanding realistic LLM behaviors. While our synthetic construction allows for a smooth interpolation of $\theta$, heterogeneous natural language tasks are typically mapped to nearly orthogonal representations in the embedding space after large-scale pretraining. This inherently places real-world multi-task prompting in the large $\theta$ regime of our framework, mathematically explaining why systematic interference and negative transfer naturally dominate when dealing with diverse NLP tasks.

\subsection{Task Order Sensitivity}
Theorem~\ref{forgetting} reveals that forgetting in in-context learning arises from attention reweighting rather than information loss. Because adaptation occurs entirely during a single forward pass without parameter updates, the entire sequence of tasks is perfectly preserved in the context window. Thus, forgetting intrinsically arises from how the attention mechanism aggregates this history. While the mask provides order, it induces systematic bias by aggregating historical tasks with fixed, decaying weights.

\noindent \textbf{Results.}
Our experiments demonstrate clear order-dependent forgetting: tasks that appear earlier in the sequence experience greater forgetting after processing subsequent tasks, consistent with the theoretical prediction that attention weights decay with distance. We observe that the forgetting pattern depends on both the task position and the specific task ordering, validating the interference mechanism described in Theorem~\ref{forgetting}. The results show that forgetting arises intrinsically from attention-based aggregation, even without any parameter updates, confirming that in-context continual learning exhibits genuine forgetting behavior.

\begin{figure}[h]
    \centering
    \includegraphics[width = \linewidth]{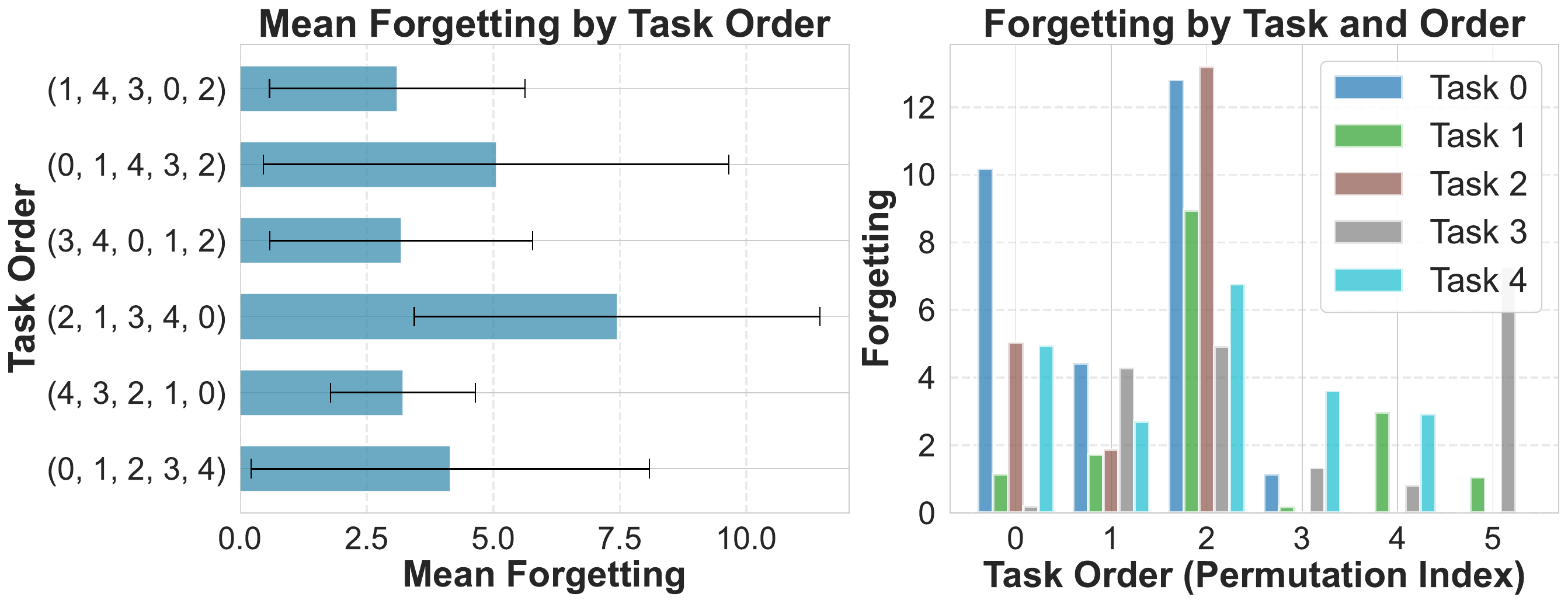}
    \caption{Task Order vs. Forgetting}
    \vspace{-0.1in}
\end{figure}

\subsection{Generalization to Non-linear Tasks}
\label{exp:nonlinear}
To demonstrate that our theoretical insights are not limited to simplified linear settings, we extend our experiments to more complex function classes, including sparse linear regression and two-layer ReLU neural networks ($f(x)=\sum_{i=1}^{r}\alpha_{i}\sigma(w_{i}^{\top}x)$). 

As detailed in the comprehensive tabular results in Appendix~\ref{app:exp_details}, increasing the context length $M$ consistently reduces prediction error for the first task. However, for subsequent tasks, increasing $M$ initially improves performance but eventually causes the error to rise sharply. For instance, in sparse linear regression, the normalized MSE for Task 5 jumps catastrophically to 0.9905 at $M=19$. This explicitly confirms our theoretical finding that aggregating extended historical contexts from heterogeneous tasks inevitably induces systematic bias and provable negative transfer. It holds robustly even in highly non-linear environments.

\subsection{Validation on Real-world LLMs}
\label{exp:realworld}
To validate our results in realistic multi-task prompting scenarios, we designed an ICCL evaluation pipeline using a modern instruction-tuned large language model, Qwen2.5-1.5B-Instruct~\cite{qwen2.5}. We selected two heterogeneous text classification tasks: SST-2 (binary sentiment classification~\cite{socher2013recursive}, Task A) and AG News (four-class topic classification~\cite{zhang2015character}, Task B). We constructed prompts by concatenating $M$ in-context examples from Task A, followed by $M$ examples from Task B, requiring the model to generate predictions in a single forward pass.

\begin{table}[h]
\centering
\caption{ICCL Evaluation on Qwen2.5-1.5B-Instruct.}
\label{tab:realworld_llm}
\resizebox{\columnwidth}{!}{%
\begin{tabular}{lcccccc}
\toprule
\multirow{2}{*}{$M$} & \multicolumn{3}{c}{\textbf{Task B (Negative Transfer)}} & \multicolumn{3}{c}{\textbf{Task A (Forgetting)}} \\
\cmidrule(lr){2-4} \cmidrule(lr){5-7}
 & Baseline & ICCL & $\Delta$ & Baseline & Final & $\Delta$ \\
\midrule
1  & 0.736 & 0.580 & -0.156 & 0.934 & 0.472 & -0.462 \\
3  & 0.702 & 0.630 & -0.072 & 0.938 & 0.468 & -0.470 \\
5  & 0.716 & 0.670 & -0.046 & 0.948 & 0.496 & -0.452 \\
19 & 0.668 & 0.684 & +0.016 & 0.922 & 0.536 & -0.386 \\
\bottomrule
\end{tabular}%
}
\end{table}

Table~\ref{tab:realworld_llm} summarizes the phenomena of negative transfer and forgetting. When $M$ is small, the historical context from Task A heavily biases the attention mechanism, causing severe negative transfer that drops Task B's accuracy by over 15\% compared to its baseline. As $M$ increases, the variance reduction effect of the relevant context emerges, allowing the model to gradually isolate task interference and recover to baseline performance levels by $M=19$.

Furthermore, evaluating the model's retention of Task A at the very end of the concatenated prompt demonstrates catastrophic degradation. Even though the LLM's weights were entirely frozen, accuracy on Task A dropped from 0.934 to 0.472 at $M=1$ after introducing Task B. Although increasing $M$ mitigates some variance-induced interference, a severe baseline level of forgetting persists asymptotically due to the irreducible mean misalignment between the two distinct NLP tasks, exactly as predicted by Theorem~\ref{forgetting}. 

Comprehensive tabular results, detailed experimental setups, and further ablation studies across different architectures are provided in Appendix~\ref{app:exp_details}.

\vspace{-0.1in}\section{Conclusion}
In this work, we introduce the first theoretical framework for in-context continual learning, analyzing how shared attention processes task sequences. By quantifying prediction error in masked linear self-attention, we identify conditions for positive versus negative transfer. We show that attention aggregation inherently causes interference—specifically order sensitivity, bias, and forgetting—even without parameter updates. Validated by experiments, these findings reveal the fundamental limits of inference-time adaptation and necessitate principled strategies for update-free continual learning.

\section*{Impact Statement}
This paper presents work whose goal is to advance the field of Machine Learning. There are many potential societal consequences of our work, none of which we feel must be specifically highlighted here.

\section*{Acknowledgements}
This work is supported in part by the MBZUAI Research Fund BF0100. We also thank the anonymous reviewers for their valuable feedback and constructive suggestions.

\bibliography{reference}
\bibliographystyle{icml2026}

\newpage
\appendix
\onecolumn
\vspace{-0.1in}\section{Pre-training Procedure}
In this work, we consider the task of in-context learning linear predictors. Following the assumptions of \citet{zhang2024trained}, we employed the same training design, including embedding matrices and initialization to obtain results regarding model parameter convergence under gradient flow. 

We will assume training prompts are sampled as follows. Let $\Lambda$ be a positive definite covariance matrix.  Each training prompt, indexed by $\tau \in \mathbb{N}$, takes the form of $P_\tau = (x_{\tau, 1}, h_\tau(x_{\tau_1}), \dots, x_{\tau, N}, h_\tau(x_{\tau, N}), x_{\tau, q})$, where task weights $w_\tau \overset{iid}{\sim} \mathcal{N}(0, I_d)$, inputs $x_{\tau, i}, x_{\tau, q} \overset{iid}{\sim} \mathcal{N}(0, \Lambda)$, and labels $h_{\tau}(x) = w^\top_\tau x$.  
 
Each prompt corresponds to an embedding matrix $E_\tau$, formed using the transformation~\eqref{p_t}:
\begin{equation*}
    E_\tau := \begin{pmatrix}
    x_{\tau,1} & x_{\tau,2} & \cdots & x_{\tau,N} & x_{\tau,q} \\
    w_\tau^\top x_{\tau,1} & w_\tau^\top x_{\tau,2} & \cdots & w_\tau^\top x_{\tau,N} & 0
\end{pmatrix} \in \mathcal{R}^{(d+1)\times (N+1)}.
\end{equation*}
The empirical risk over $B$ independent prompts is defined as
\begin{equation}\label{eqn:emp_loss_maintext}
    \widehat L(\theta) = \frac{1}{2B} \sum_{\tau=1}^B \bigg(\widehat y_{\tau,q} - w_\tau^\top x_{\tau,q}\bigg)^2.
\end{equation}
We consider the behavior of gradient flow-trained networks over the population loss induced by the limit of infinite training tasks $B\to \infty$:
\begin{equation}\label{eqn:population_loss}
    L(\theta) = \lim_{B\to \infty} \widehat L(\theta) = \frac{1}{2} \mathbb{E}_{w_\tau, x_{\tau,1}, \cdots, x_{\tau,N}, x_{\tau,q}} \left[ (\widehat y_{\tau,q} - w_\tau^\top x_{\tau,q})^2 \right]
\end{equation}
Above, the expectation is taken w.r.t. the covariates $\{x_{\tau,i}\}_{i=1}^N \cup \{x_{q}\}$ in the prompt and the weight vector $w_\tau$, i.e. over $x_{\tau,i},x_q \overset{iid}{\sim} \mathcal{N}(0, \Lambda)$ and $w_{\tau} \sim \mathcal{N}(0,I_d)$.  
Gradient flow captures the behavior of gradient descent with an infinitesimal step size and has dynamics given by the following differential equation:
\begin{equation}\label{eqn:gf}
    \frac{\mathrm{d}}{\mathrm dt} \theta = - \nabla L(\theta).
\end{equation}

We will consider gradient flow with an initialization that satisfies the following and cite their theorem here. More details please refer to \citet{zhang2024trained}.
\begin{assumption}[Initialization]
\label{initialization}
    Let $\sigma>0$ be a parameter, and let $\Theta\in \mathbb{R}^{d\times d}$ be any matrix satisfying $\|\Theta\Theta^\top\|_{F}=1$ and $\Theta\Lambda\ne0_{d\times d}$. We assume
    \begin{equation}
        W(0) = \sigma\begin{pmatrix}
  0_{d\times d}&0_d \\
  0^\top_d&1
\end{pmatrix},\quad V(0)=\sigma\begin{pmatrix}
  \Theta\Theta^\top&0_d \\
  0^\top_d&0
\end{pmatrix}
    \end{equation}
\end{assumption}

\begin{theorem}[Convergence and limits]
\label{limit}
    Consider the gradient flow of the linear self-attention network defined in Section~\ref{lsa}. Suppose the initialization satisfies Assumption~\ref{initialization} with initialization scale $\sigma > 0$ satisfying $\sigma^2\|\Gamma\|_{op}\sqrt{d}<2$ where we have defined
    \begin{equation*}
        \Gamma:=(1+\frac{1}{N})\Gamma + \frac{1}{N}tr(\Lambda)I_d\in \mathbb{R}^{d\times d}
    \end{equation*}
    Then the gradient flow converges to a global minimum of the population loss. Moreover, $W$ and $V$ converge to $W^*$ and $V^*$ respectively, where
    \begin{equation}
    W^* = a^{\frac{1}{4}}\begin{pmatrix}
  0_{d\times d}&0_d \\
  0^\top_d&1
\end{pmatrix},\quad V^* = a^{-\frac{1}{4}}\begin{pmatrix}
  \Gamma^{-1}&0_d \\
  0^\top_d&0
\end{pmatrix}
    \end{equation}
    where $a = tr(\Gamma^{-2})$.
\end{theorem}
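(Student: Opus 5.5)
The statement is Theorem~\ref{limit}, which is the convergence result of \citet{zhang2024trained}. Note that the displayed $\Gamma$ should read $(1+\frac{1}{N})\Lambda+\frac{1}{N}\mathrm{tr}(\Lambda)I_d$, consistent with Theorem~\ref{thm-generalization}. The plan is to follow their argument, specialised to our embedding $E_\tau$ and to the initialization in Assumption~\ref{initialization}.

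\textbf{Step 1: reduce the dynamics to two blocks.} Write $W=\begin{pmatrix}W_{11}&w_{12}\\ w_{21}^\top&w_{22}\end{pmatrix}$ and partition $V$ the same way. Only the last row of $W$ and the first $d$ columns of $V$ affect $\hat y_{\tau,q}$, which is the bottom-right entry of the output. I will then check that under gradient flow from Assumption~\ref{initialization} all other blocks stay identically zero. To see this, compute $\nabla L$ at any point of the invariant set
\[
\{W_{11}=0,\ w_{12}=0,\ w_{21}=0,\ v_{12}=0,\ v_{21}=0,\ v_{22}=0\}.
\]
The gradient components along these blocks vanish there. The reason is symmetry: $x\mapsto -x$ and $w\mapsto -w$ leave the Gaussian distributions invariant, and odd moments vanish. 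On this set the prediction is
\[
\hat y_{\tau,q}=w_{22}\,x_{\tau,q}^\top V_{11}\Big(\frac1N\sum_i x_{\tau,i}x_{\tau,i}^\top\Big)w_\tau .
\]

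\textbf{Step 2: compute the population loss in closed form.} Using Gaussian fourth moments,
\[
\mathbb{E}\big[\hat\Lambda\Lambda\hat\Lambda\big]=\Lambda^3+\tfrac1N\big(\Lambda^3+\mathrm{tr}(\Lambda)\Lambda^2\big),
\qquad \hat\Lambda=\tfrac1N\sum_i x_{\tau,i}x_{\tau,i}^\top .
\]
Together with $w_\tau\sim\mathcal N(0,I)$ this gives, with $U=w_{22}V_{11}$,
\[
L=\tfrac12\Big[\mathrm{tr}\big(U^\top\Lambda U\,\Lambda\Gamma\big)-2\,\mathrm{tr}\big(U\Lambda^2\big)+\mathrm{tr}(\Lambda)\Big].
\]
This is a convex quadratic in $U$. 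Its unique minimizer is $U^*=\Gamma^{-1}$, obtained from the first-order condition $\Lambda U\Lambda\Gamma=\Lambda^2$.

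\textbf{Step 3: show convergence to that global minimum.} The product parameterisation $U=w_{22}V_{11}$ makes $L$ nonconvex in $(w_{22},V_{11})$. I will use the balancedness invariant of gradient flow,
\[
\frac{d}{dt}\Big(w_{22}^2-\|V_{11}\|_F^2\Big)=0 .
\]
The initialization gives $w_{22}^2=\|V_{11}\|_F^2=\sigma^2$, so balancedness holds for all time. Next I will show that $w_{22}$ stays positive and bounded away from $0$. Its derivative is $-\partial L/\partial w_{22}=\mathrm{tr}(V_{11}\Lambda^2)-\cdots$. Since $\Theta\Lambda\neq0$, we have $\mathrm{tr}(\Theta\Theta^\top\Lambda^2)>0$, so $w_{22}$ starts increasing. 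The small-scale condition $\sigma^2\|\Gamma\|_{op}\sqrt d<2$ rules out the saddle at $U=0$ and prevents overshoot. I will also use a Polyak--Łojasiewicz inequality on the region $w_{22}\geq c>0$, which follows from strong convexity of $L$ in $U$. These give $L\to L^*$ and $U(t)\to\Gamma^{-1}$.

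\textbf{Step 4: identify the limits.} Balancedness at the limit gives $w_{22}^2=\|V_{11}\|_F^2$ and $w_{22}V_{11}=\Gamma^{-1}$. Hence $w_{22}^4=\|\Gamma^{-1}\|_F^2=\mathrm{tr}(\Gamma^{-2})=a$, which yields $W^*$ and $V^*$.

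\textbf{Main obstacle.} The hard part is Step 3: ruling out stalling at $w_{22}=0$ and ensuring convergence to the balanced minimum rather than another stationary point. I would first follow Zhang et al.'s argument that $w_{22}(t)$ is monotone nondecreasing under the initialization-scale condition. I would then apply a PL inequality restricted to the trajectory.
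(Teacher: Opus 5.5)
Your overall architecture matches the argument of Zhang et al.: reduction to the $(w_{22},V_{11})$ block, the closed-form loss with minimizer $U^*=\Gamma^{-1}$, balancedness, a PL inequality on a region where $w_{22}$ is bounded below, and $w_{22}^4=\|\Gamma^{-1}\|_F^2=a$. That argument is all the paper relies on here, since it gives no proof of its own. Your correction of the typo in $\Gamma$ is also right.

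The gap is in Step 3, at exactly the point you flag. Your remark that the scale condition ``rules out the saddle'' points the right way, but the mechanism you commit to is false. First, $w_{22}$ need not start increasing. With $B=\Theta\Theta^\top$, its derivative at $t=0$ is $\sigma\,\mathrm{tr}(B\Lambda^2)-\sigma^3\,\mathrm{tr}(\Lambda\Gamma B\Lambda B)$, which can be negative under the hypothesis. Second, $w_{22}$ need not be monotone nondecreasing. Take $d=1$, $\Lambda=\lambda$, $\Theta=1$, so $\Gamma=\gamma=\lambda(1+2/N)$ and the hypothesis reads $\sigma^2\gamma<2$. Balancedness forces $V_{11}\equiv w_{22}=:u$, and the flow is $\dot u=\lambda^2u(1-\gamma u^2)$. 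For $1<\sigma^2\gamma<2$, $u$ decreases monotonically from $\sigma$ to $\gamma^{-1/2}$. Taking $B$ to be the projector onto the top eigenvector of $\Lambda$ gives an initial decrease for $d=2,3$ as well. A proof resting on monotonicity of $w_{22}$ therefore does not cover the theorem's hypothesis.

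The missing idea is to use monotonicity of the loss instead. Since $\Lambda\Gamma\preceq\|\Gamma\|_{op}\Lambda$ and $\|B\|_{op}\le\|B\|_F=1$,
\[
L(\theta(0))-\tfrac12\mathrm{tr}(\Lambda)=\tfrac12\sigma^4\,\mathrm{tr}(\Lambda\Gamma B\Lambda B)-\sigma^2\,\mathrm{tr}(B\Lambda^2)\le\sigma^2\,\mathrm{tr}(B\Lambda^2)\big(\tfrac12\sigma^2\|\Gamma\|_{op}-1\big)<0.
\]
So the scale condition puts the initialization strictly below the value $\tfrac12\mathrm{tr}(\Lambda)$ that $L$ takes at $U=0$; for $d=1$ the condition is exactly this.
\begin{itemize}
\item The quadratic part of $L$ is nonnegative, so $L\ge\tfrac12\mathrm{tr}(\Lambda)-\|\Lambda^2\|_F\|U\|_F$, and balancedness gives $\|U\|_F=w_{22}^2$.
\item Since $L$ is nonincreasing along the flow, $w_{22}(t)^2\ge\delta/\|\Lambda^2\|_F$ with $\delta=\tfrac12\mathrm{tr}(\Lambda)-L(\theta(0))>0$. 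Continuity then keeps $w_{22}(t)$ positive.
\item This is the region your PL step needs: $\frac{d}{dt}(L-L^*)\le-w_{22}^2\|\nabla_U L\|_F^2\le-c\,(L-L^*)$ with $c>0$, so $U(t)\to\Gamma^{-1}$ by strong convexity.
\item Then $w_{22}^2=\|U\|_F\to\|\Gamma^{-1}\|_F$ and $V_{11}=U/w_{22}$ give Step 4, without presupposing that the parameters converge.
\end{itemize}

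Two small slips in Step 2, neither affecting your final $L$, which is correct. The moment entering $L$ is $\mathbb{E}[\hat\Lambda^2]=\Lambda\Gamma$; your displayed $\mathbb{E}[\hat\Lambda\Lambda\hat\Lambda]$ is not what appears, and its last term should be $\mathrm{tr}(\Lambda^2)\Lambda$. On the invariant set the prediction is $w_{22}\,w^\top\hat\Lambda V_{11}x_{\tau,q}$, i.e.\ $V_{11}^\top$ in your formula. This is harmless since the initialization and the limit are symmetric.
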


\begin{remark}[Training Effect]
In our theoretical analysis, the effect of training is primarily reflected in the final learned parameter matrix, which we characterize in Theorem~\ref{limit}. As discussed in Section~\ref{4.1}, the dominant factor influencing this process is the training samples. 

Taking generalization as an example, in Section~\ref{4.1} we provide a detailed analysis of how the number of samples affects model performance. In general, larger sample sizes do not always lead to better performance. In our experimental observations, however, with the number of training steps set to 500k, the final loss tends to stabilize regardless of the number of training samples, as long as it exceeds a certain threshold. To further support our theoretical findings, we present the training loss curve, which shows that under the curriculum learning we used in the initial training step, the loss initially decreases and then increases as the number of samples grows. 

\begin{figure}
    \centering
    \includegraphics[width=0.5\linewidth]{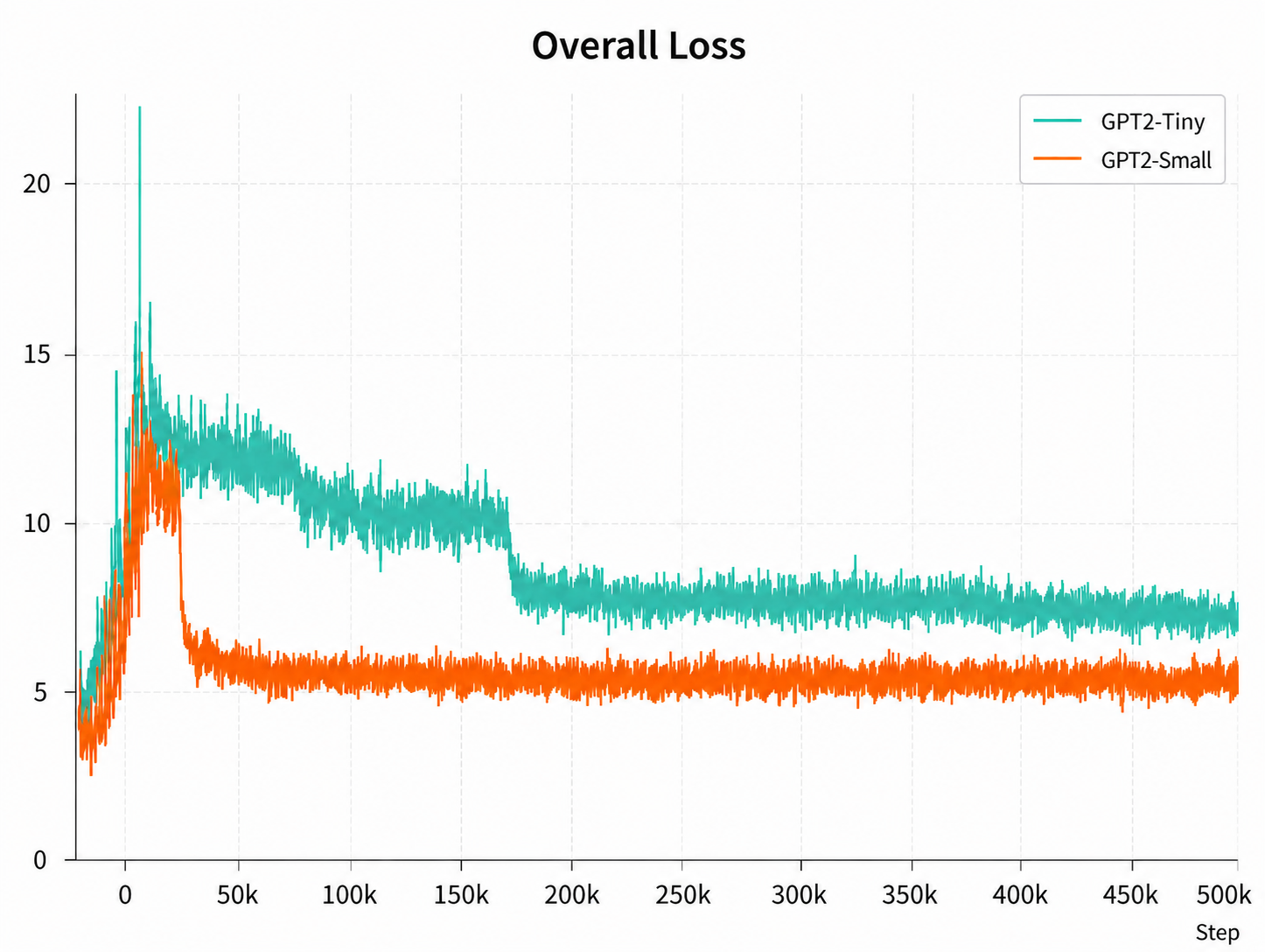}
    \caption{Training Loss Curve}
    \label{fig:training}
\end{figure}

We also emphasize that the obtained parameter is derived under gradient flow dynamics with appropriate initialization, yielding a closed-form analytical result. In practical training scenarios, however, the learning process is affected by additional factors, such as the learning rate, model size and the number of attention heads, etc. We leave it to future work.
\end{remark}

\vspace{-0.1in}\section{Useful Lemma and Corollary}
\label{useful_lemma}
\begin{lemma}
    Define $S_s=\sum_{i=1}^M x_{s,i}y_{s,i}$ and $\mu_t=\mathbb E[x_{t,i}y_{t,i}],\Sigma_t=\operatorname{Var}(x_{t,i}y_{t,i})$, then we have
\[
\mathbb{E}[S_i S_j^\top]
=
\begin{cases}
M\Sigma_i+M^2\mu_i\mu_i^\top,& i=j,\\[4pt]
M^2\mu_i\mu_j^\top,& i\ne j,
\end{cases}
\]
\end{lemma}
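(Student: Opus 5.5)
The plan is to expand $S_iS_j^\top$ as a double sum over sample indices and take expectations term by term. The only structural facts needed are the following. Within a task $s$, the vectors $z_{s,k}:=x_{s,k}y_{s,k}$ for $k=1,\dots,M$ are i.i.d. with mean $\mu_s$ and covariance $\Sigma_s$, conditionally on the task parameter $w_s$. Vectors from different tasks are independent. Throughout, I treat $w_1,\dots,w_T$ as fixed, so that $\mu_s$ and $\Sigma_s$ are the conditional moments; this is how they are used in Theorems~\ref{thm-generalization} and~\ref{forgetting}.

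Write
\[
S_iS_j^\top=\sum_{k=1}^M\sum_{l=1}^M z_{i,k}z_{j,l}^\top .
\]

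\textbf{Case $i\ne j$.} Every pair $(z_{i,k},z_{j,l})$ consists of independent vectors, since the inputs $x_{i,k}$ and $x_{j,l}$ are drawn independently. Hence $\mathbb{E}[z_{i,k}z_{j,l}^\top]=\mu_i\mu_j^\top$. Summing over the $M^2$ pairs gives $M^2\mu_i\mu_j^\top$.

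\textbf{Case $i=j$.} Split the double sum into diagonal terms $k=l$ and off-diagonal terms $k\ne l$.
\begin{itemize}
\item For $k\ne l$, independence of the samples within the task gives $\mathbb{E}[z_{i,k}z_{i,l}^\top]=\mu_i\mu_i^\top$. There are $M(M-1)$ such terms.
\item For $k=l$, the definition of variance gives $\mathbb{E}[z_{i,k}z_{i,k}^\top]=\Sigma_i+\mu_i\mu_i^\top$. There are $M$ such terms.
\end{itemize}
Adding the two groups,
\[
M\Sigma_i+M\mu_i\mu_i^\top+M(M-1)\mu_i\mu_i^\top=M\Sigma_i+M^2\mu_i\mu_i^\top .
\]

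The only delicate point is conditioning. If $w_s$ were averaged over, the samples within a task would be correlated through the shared $w_s$, and the $i=i$ cross terms would no longer equal $\mu_i\mu_i^\top$. Conditioning on the task weights removes this issue. I would state it explicitly, noting that $\mu_t=\Lambda w_t$ is $w_t$-dependent. The remaining steps are routine bookkeeping.
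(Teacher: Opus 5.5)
Your proof is correct and follows the paper's reasoning. The paper never writes this lemma's proof out; in the proof of the forgetting theorem it simply cites it for sums of within-task i.i.d. terms with independent task groups, and your diagonal/off-diagonal split is exactly that computation. Your remark that $\mu_s,\Sigma_s$ must be read as moments conditional on the task weights $w_s$ is a useful clarification the paper leaves implicit: without conditioning, the within-task cross terms would pick up the covariance of $\Lambda w_s$.
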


\begin{lemma}
    For $x\sim \mathcal{N}(\mu,\Lambda),\Lambda\in \mathbb{R}^{d\times d}$ and a matrix $A\in \mathbb{R}^{d\times d}$, we have the first four moment as follows:
    \begin{align*}
        \mathbb{E}(x) = \mu, \\
        \mathbb{E}(xx^\top) = \Lambda,\\
        \mathbb{E}(\|w\|^2) = \|\mu\|^2 + \operatorname{Tr}(\Lambda),
    \end{align*}
\end{lemma}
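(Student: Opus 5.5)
The plan is to reduce everything to a standard Gaussian by writing $x = \mu + \Lambda^{1/2} z$ with $z\sim\mathcal N(0,I_d)$, where $\Lambda^{1/2}$ is the symmetric positive semidefinite square root. Then $x$ has the stated distribution, and every moment follows from the elementary facts $\mathbb E z = 0$ and $\mathbb E zz^\top = I_d$. These facts hold coordinatewise because the $z_k$ are independent with mean $0$ and variance $1$.

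For the first moment, linearity gives $\mathbb E x = \mu + \Lambda^{1/2}\mathbb E z = \mu$.

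For the second moment, expand
\[
xx^\top = \mu\mu^\top + \mu z^\top\Lambda^{1/2} + \Lambda^{1/2}z\mu^\top + \Lambda^{1/2}zz^\top\Lambda^{1/2}.
\]
The two cross terms have zero expectation, and the last term has expectation $\Lambda$. This yields $\mathbb E xx^\top = \Lambda + \mu\mu^\top$.

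The statement as written, $\mathbb E(xx^\top)=\Lambda$, is exactly the centered case $\mu = 0$. That is the only case used elsewhere in the paper, since all inputs satisfy $x_{t,i}\sim\mathcal N(0,\Lambda)$. I would state explicitly that the displayed identity is meant for $\mu=0$, or equivalently reads $\mathrm{Cov}(x)=\Lambda$. I would then record the general formula $\Lambda+\mu\mu^\top$ alongside it so the third identity stays consistent.

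For the squared norm, read the "$w$" in the statement as the Gaussian vector $x$. Taking the trace of the second-moment identity, and using that trace commutes with expectation, gives
\[
\mathbb E\|x\|^2 = \mathbb E\operatorname{tr}(xx^\top) = \operatorname{tr}(\Lambda+\mu\mu^\top) = \operatorname{tr}(\Lambda) + \|\mu\|^2.
\]
This is the claimed formula.

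The statement also mentions a matrix $A$ and "four moments." If a quadratic-form moment such as $\mathbb E[x^\top A x] = \operatorname{tr}(A\Lambda) + \mu^\top A\mu$ is intended, the same trace argument applied to $\operatorname{tr}(A\,xx^\top)$ gives it. A fourth moment of the kind needed later for $\Sigma_t$, namely $\mathbb E[xx^\top B xx^\top] = \Lambda B\Lambda + \Lambda B^\top\Lambda + \operatorname{tr}(B\Lambda)\Lambda$ for centered $x$, would follow from Isserlis' theorem applied to $z$.

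There is no genuine obstacle; the argument is routine. The only care point is the second-moment line, where the noncentral correction $\mu\mu^\top$ must be handled correctly. That correction is exactly what makes the third identity contain $\|\mu\|^2$.
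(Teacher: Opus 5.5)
The paper gives no proof of this lemma. It appears in the appendix section ``Useful Lemma and Corollary'' and is used as a standard fact, so there is no argument of the paper's to compare against. Your reparametrization $x=\mu+\Lambda^{1/2}z$ is the standard argument, and every step is correct.

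You are also right that the second line is false as literally stated when $\mu\neq 0$. The correct identity is $\mathbb{E}(xx^\top)=\Lambda+\mu\mu^\top$. Only this version is consistent with the third line (read with $x$ in place of the typo $w$), since taking the trace of the literal second line gives $\operatorname{tr}(\Lambda)$ rather than $\|\mu\|^2+\operatorname{tr}(\Lambda)$.

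Your observation that the paper only applies the lemma to centered Gaussians is accurate. Your Isserlis formula is the fourth-moment fact the paper actually relies on without stating it. With $B=ww^\top$ it gives $\Sigma_t=\Lambda ww^\top\Lambda+(w^\top\Lambda w)\Lambda$, hence $\operatorname{tr}(\Sigma_s)=(d+1)\|w_s\|^2$ for $\Lambda=I_d$. This is the value used in the main-text discussion of context length.
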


\begin{corollary}[Forgetting Decomposition]
\label{d_forgetting}
Separating the Theorem~\ref{forgetting} from tasks $1\le i\le t$ and $t<i\le T$ yields
\[
\small\small
\begin{aligned}
&\mathbb{E}[(\hat{Y}_{t,q}-\hat y_{t,q})^2]\\
=&\;
Mc_t^2\sum_{i=1}^t \mathrm{tr}(\Sigma_i\Gamma^{-2}\Lambda)
+\frac{M}{(TM+T+1)^2}\sum_{i=t+1}^T \mathrm{tr}(\Sigma_i\Gamma^{-2}\Lambda)
\\
&+M^2 c_t^2\sum_{i=1}^t h_{i,i} +\frac{M^2}{(TM+T+1)^2}\sum_{i=t+1}^T h_{i,i}
\\
&+M^2c_t^2\!\sum_{\substack{i,j=1\\i\ne j}}^t 
h_{i,j}
+\frac{M^2}{(TM+T+1)^2}\!
\sum_{\substack{i,j=t+1\\i\ne j}}^Th_{i,j}
\\
&\frac{M^2c_t}{TM+T+1}\sum_{1\le i <t <j\le T}h_{i,j} +\frac{M^2c_t}{TM+T+1}\sum_{1\le j <t <i\le T}h_{i,j}
\end{aligned}
\]
where $h_{i,j} = \mathrm{tr}(\mu_i\mu_j^\top\Gamma^{-2}\Lambda)$ induces an inner product between task means.
\end{corollary}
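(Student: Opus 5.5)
The plan is to start from the interference formula of Theorem~\ref{forgetting}, written as $\sum_{i,j=1}^T \alpha_i(t)\alpha_j(t)\,\mathbb{E}[\mathrm{tr}(S_iS_j^\top\Gamma^{-2}\Lambda)]$. This form follows because $\hat Y_{t,q}-\hat y_{t,q}=x_{t,q}^\top\Gamma^{-1}\sum_i\alpha_i(t)S_i$ and $x_{t,q}$ is independent of the $S_i$ with $\mathbb{E}[x_{t,q}x_{t,q}^\top]=\Lambda$. The moment lemma of Appendix~\ref{useful_lemma} gives $\mathbb{E}[S_iS_j^\top]=M\Sigma_i\mathbf{1}_{i=j}+M^2\mu_i\mu_j^\top$. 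Substituting and writing $h_{i,j}=\mathrm{tr}(\mu_i\mu_j^\top\Gamma^{-2}\Lambda)$, the error becomes
\[
M\sum_{i=1}^T\alpha_i(t)^2\,\mathrm{tr}(\Sigma_i\Gamma^{-2}\Lambda)\;+\;M^2\sum_{i,j=1}^T\alpha_i(t)\alpha_j(t)\,h_{i,j}.
\]

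The rest is bookkeeping by block. Partition the index set $[T]$ into $\mathcal{P}=\{1,\dots,t\}$, where $\alpha_i(t)=c_t$, and $\mathcal{F}=\{t+1,\dots,T\}$, where $\alpha_i(t)=d=1/(TM+T+1)$.

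\begin{itemize}
\item The $\Sigma$-sum splits into $Mc_t^2\sum_{i\le t}$ plus $Md^2\sum_{i>t}$. These are the first two displayed terms.
\item The double sum over $(i,j)$ splits into the diagonal $i=j$ pieces, weighted by $c_t^2$ on $\mathcal{P}$ and $d^2$ on $\mathcal{F}$. These give the two $h_{i,i}$ terms.
\item The off-diagonal pairs lying entirely within $\mathcal{P}$ (weight $c_t^2$) or entirely within $\mathcal{F}$ (weight $d^2$) give the next two terms.
\item The cross pairs with one index in $\mathcal{P}$ and the other in $\mathcal{F}$ carry weight $c_td=c_t/(TM+T+1)$.
\end{itemize}

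The cross pairs are split into the ordered cases $i\in\mathcal{P},j\in\mathcal{F}$ and $j\in\mathcal{P},i\in\mathcal{F}$, which are the last two sums in the corollary.

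The only delicate step is getting the index ranges of the cross terms right. The ordered pairs with $i\le t<j$ must be written with the correct (non-strict versus strict) inequality at $t$, since index $t$ itself belongs to $\mathcal{P}$. I would check that every ordered pair $(i,j)$ with $i\neq j$ is counted exactly once across the within-$\mathcal{P}$, within-$\mathcal{F}$, and two cross sums. The cross sums should therefore range over $1\le i\le t<j\le T$ and its mirror; I would reconcile this with the displayed ranges, reading the strict $i<t$ as covering $i\le t$. Finally I would note that $h_{i,j}$ is symmetric whenever $\Gamma^{-2}\Lambda$ is symmetric. This holds here because $\Gamma$ is a polynomial in $\Lambda$, so the two cross sums are in fact equal.
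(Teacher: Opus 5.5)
Your proposal is correct and follows the same route as the paper, whose proof is only the remark that one unfolds the formula of Theorem~\ref{forgetting} by splitting indices into $i\le t$ (coefficient $c_t$) and $i>t$ (coefficient $d=1/(TM+T+1)$). Your reading of the cross sums as ranging over $1\le i\le t<j\le T$ and its mirror is the right fix: taken literally, the displayed strict $i<t$ would omit the pairs $(t,j)$ and $(j,t)$ with $j>t$, and the last displayed line is also missing a leading $+$.
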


\begin{theorem}[Prediction Error for Task $t$ after T Tasks]
We take the same notation in Theorem~\ref{thm-generalization}, then the prediction error for task $t$ satisfies
\[
\begin{aligned}
\mathbb{E}[(\hat Y_{t,q}-y_{t,q})^2]
=&\min_{w\in\mathbb{R}^d} \mathbb{E}(w^\top x_{t,q} - y_{t,q})^2 \\&+\frac{M}{(TM+T+1)^2}\,\sum_{s=1}^T\mathrm{tr}(\Sigma_s\,\Gamma^{-2}\Lambda)\\
+&\sum_{i=1}^d \frac{1}{\lambda_i}\Big(\frac{\lambda_i(\beta s_i - m_i) - m_i\frac{\lambda_i + \operatorname{tr}(\Lambda)}{N}}{\lambda_i + \frac{\lambda_i + \operatorname{tr}(\Lambda)}{N}} \Big)^2
\end{aligned}
\]
where $\beta=\frac{M}{TM+T+1}, \Lambda =\sum_{i=1}^d\lambda_i u_iu_i^\top,s_i = u_i^\top\sum_{s=1}^T\mu_s,m_i = u_i^\top\mu_t$.
\end{theorem}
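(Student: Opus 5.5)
The plan is to repeat the argument of Theorem~\ref{thm-generalization}, with the task-$t$ prediction replaced by the final-query prediction of Theorem~\ref{forgetting}:
\[
\hat Y_{t,q}=x_{t,q}^\top\Gamma^{-1}\Big(\tfrac{1}{TM+T+1}\textstyle\sum_{s=1}^T S_s\Big).
\]
Write $\hat w=\Gamma^{-1}\frac{1}{TM+T+1}\sum_{s=1}^T S_s$. The query $x_{t,q}$ is independent of all in-context data. Hence, conditioning on the context and using $\mathbb{E}[x_{t,q}x_{t,q}^\top]=\Lambda$,
\[
\mathbb{E}[(\hat Y_{t,q}-y_{t,q})^2]=\mathbb{E}\big[(\hat w-w_t)^\top\Lambda(\hat w-w_t)\big].
\]
Since labels are noiseless, $\min_w\mathbb{E}(w^\top x_{t,q}-y_{t,q})^2=0$. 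That term is kept only to match the form of Theorem~\ref{thm-generalization}; it is the irreducible part of the decomposition.

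Next comes a bias--variance split around $\mathbb{E}\hat w$. The $S_s$ are independent across tasks, with $\mathbb{E}S_s=M\mu_s$ and $\mathrm{Cov}(S_s)=M\Sigma_s$ by the moment lemma of Appendix~\ref{useful_lemma}. So $\mathbb{E}\hat w=\beta\Gamma^{-1}\sum_{s}\mu_s$ with $\beta=\frac{M}{TM+T+1}$. The cross term vanishes, and the variance part is
\[
\frac{1}{(TM+T+1)^2}\sum_{s=1}^T M\,\mathrm{tr}(\Gamma^{-1}\Sigma_s\Gamma^{-1}\Lambda).
\]
Because $\Gamma$ is a polynomial in $\Lambda$, the matrices $\Gamma^{-1}$ and $\Lambda$ commute, so this trace equals $\mathrm{tr}(\Sigma_s\Gamma^{-2}\Lambda)$. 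This gives the second line of the statement.

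For the bias, use $\mu_t=\mathbb{E}[x x^\top]w_t=\Lambda w_t$, so $w_t=\Lambda^{-1}\mu_t$. The bias term is then $(\beta\Gamma^{-1}\sum_s\mu_s-\Lambda^{-1}\mu_t)^\top\Lambda(\cdots)$. Diagonalize in the eigenbasis $u_i$ of $\Lambda$, where $\Gamma$ has eigenvalues $\gamma_i=\lambda_i+\frac{\lambda_i+\mathrm{tr}(\Lambda)}{N}$. The $i$-th coordinate of the vector is $\beta s_i/\gamma_i-m_i/\lambda_i$, and the quadratic form gives
\[
\sum_i\lambda_i\Big(\frac{\beta s_i}{\gamma_i}-\frac{m_i}{\lambda_i}\Big)^2=\sum_i\frac{1}{\lambda_i}\Big(\frac{\lambda_i\beta s_i-m_i\gamma_i}{\gamma_i}\Big)^2 .
\]
Writing $m_i\gamma_i=\lambda_i m_i+m_i\frac{\lambda_i+\mathrm{tr}(\Lambda)}{N}$ puts the numerator in the form $\lambda_i(\beta s_i-m_i)-m_i\frac{\lambda_i+\mathrm{tr}(\Lambda)}{N}$, which is exactly the stated bias term.

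The main thing to check carefully is the normalization $\frac{1}{TM+T+1}$ coming from the causal mask at the final-query position, which is taken from Theorem~\ref{forgetting}. The other point to confirm is the commutation of $\Gamma$ and $\Lambda$. Note that the displayed definition of $\Gamma$ in Theorem~\ref{limit} has a typo; I will use the form $\Gamma=\Lambda+\frac1N\Lambda+\frac1N\mathrm{tr}(\Lambda)I_d$ from Theorem~\ref{thm-generalization}. Everything else is the computation from Theorem~\ref{thm-generalization} with $t$ replaced by $T$ in the sums and $\alpha$ replaced by $\beta$.
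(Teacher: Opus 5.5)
Your proposal is correct and is essentially the paper's argument: the proof of Theorem~\ref{thm-generalization} rerun with the final-position mask normalization $\frac{1}{T(M+1)+1}$ and sums over all $T$ tasks. It uses the same variance computation via the moment lemma and the commutation of $\Gamma$ with $\Lambda$, and the same eigenbasis simplification of the bias. The only cosmetic difference is that the paper centers at the best linear predictor $a=\Lambda^{-1}\mathbb{E}[x_{t,q}y_{t,q}]$ and removes the cross terms via its first-order optimality, which keeps the irreducible term general. You instead take $a=w_t$ directly because the labels are noiseless, so that term is zero.
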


\begin{corollary}[Average Prediction Error after T Tasks]
\label{ave_pre}
\[
\begin{aligned}
\frac{1}{T}\sum_{t=1}^T 
\mathbb{E}\big[(\hat{Y}_{t,q}-y_{t,q})^2\big]
=&\frac{1}{T}\sum_{t=1}^T\min_{w\in\mathbb{R}^d} \mathbb{E}(w^\top x_{t,q} - y_{t,q})^2\\ +&\frac{M}{(TM+T+1)^2}\,\sum_{s=1}^T\mathrm{tr}(\Sigma_s\,\Gamma^{-2}\Lambda)\\
+&\sum_{i=1}^d \frac{1}{\lambda_i}\Big[(\bar{m}_i - \frac{\lambda_i\beta s_i}{d_i})^2 + \frac{1}{T}\sum_{t=1}^T(m_{i,t} - \bar{m}_i)^2\Big]\\
\end{aligned}
\]
where $d_i = \lambda_i + \frac{\lambda_i+\operatorname{tr}(\Lambda)}{N}, m_{i,t} = u_i^\top\mu_t,\bar{m}_i = \frac{1}{T}\sum_{t=1}^Tm_{i,t}$.
\end{corollary}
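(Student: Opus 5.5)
The plan is to average the per-task error from the preceding theorem (Prediction Error for Task $t$ after $T$ Tasks) over $t=1,\dots,T$, handling its three terms one at a time. The first term averages directly into $\frac{1}{T}\sum_t \min_w \mathbb{E}(w^\top x_{t,q}-y_{t,q})^2$. The variance term $\frac{M}{(TM+T+1)^2}\sum_{s=1}^T \mathrm{tr}(\Sigma_s\Gamma^{-2}\Lambda)$ does not depend on $t$: after all $T$ tasks the final prediction aggregates the same $\sum_{s=1}^T S_s$, only the query index changes, and Theorem~\ref{forgetting} gives the prefactor $\frac{1}{T(M+1)+1}$. Its average is therefore itself.

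The real work is the bias term. Write $d_i=\lambda_i+\frac{\lambda_i+\operatorname{tr}(\Lambda)}{N}$. The numerator $\lambda_i(\beta s_i-m_{i,t})-m_{i,t}\frac{\lambda_i+\operatorname{tr}(\Lambda)}{N}$ simplifies to $\lambda_i\beta s_i-m_{i,t}d_i$. Hence the $i$-th summand equals $\frac{1}{\lambda_i}\big(\frac{\lambda_i\beta s_i}{d_i}-m_{i,t}\big)^2$. The key observation is that $s_i=u_i^\top\sum_{s=1}^T\mu_s$ is the same for every $t$, because after $T$ tasks every query sees the full history. So for each fixed eigen-direction $i$, set $c_i=\frac{\lambda_i\beta s_i}{d_i}$, a constant in $t$. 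The summand is then $\frac{1}{\lambda_i}(c_i-m_{i,t})^2$.

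Averaging a squared deviation from a constant over $t$ is the standard variance decomposition:
\[
\frac{1}{T}\sum_{t=1}^T(c_i-m_{i,t})^2=(c_i-\bar m_i)^2+\frac{1}{T}\sum_{t=1}^T(m_{i,t}-\bar m_i)^2 .
\]
This holds because the cross term $2(c_i-\bar m_i)\frac{1}{T}\sum_t(\bar m_i-m_{i,t})$ vanishes by the definition of $\bar m_i$. Summing over $i$ with the weights $1/\lambda_i$ gives the stated bracket, and adding the three pieces finishes the proof.

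The only place needing care is this bias simplification: checking algebraically that the numerator factors as $\lambda_i\beta s_i-m_{i,t}d_i$, and confirming that $s_i$, $\beta$ and $d_i$ carry no $t$-dependence. Unlike the masked per-task prediction of Theorem~\ref{thm-generalization}, where $s_i$ sums only up to $t$, this $t$-independence is exactly what makes the clean decomposition possible. I would also check that the eigenbasis notation $u_i$ is used consistently with $v_i$ from Theorem~\ref{thm-generalization}.
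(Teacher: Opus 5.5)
Your proposal is correct and follows essentially the same route as the paper. The paper also averages the per-task error after $T$ tasks, carries the first two terms through unchanged, writes the $i$-th bias summand as $\frac{1}{\lambda_i}\bigl(\frac{\lambda_i\beta s_i}{d_i}-m_{i,t}\bigr)^2$ with $s_i,\beta,d_i$ independent of $t$, and then applies the same mean-plus-deviation decomposition, where the cross term vanishes because $\sum_t(m_{i,t}-\bar m_i)=0$. Your explicit check that the numerator factors as $\lambda_i\beta s_i-m_{i,t}d_i$ is the step the paper leaves implicit when it writes $Q_{i,t}$ directly.
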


\vspace{-0.1in}\section{Proof of Lemma~\ref{prediction}}
\label{lemma4.1}
\textit{Proof.} 
Recall the definition of Linear Self-Attention,
\[
f_{\mathrm{LSA}}(P;\theta)
= P + WP\frac{P^\top VP}{\rho},
\]
so, we have
\[
\hat{y}_{t,q} = [f_{\mathrm{LSA}}(P;\theta)]_{(d+1),(t(M+1))}.
\]

Since the prediction takes only the right-bottom entry of the token matrix output by the LSA layer, actually only part of $W$ and $V$ affect the prediction. To see how, let us denote
\[
W = \begin{pmatrix}
    W_{11} & w_{12} \\
    w_{21}^\top & w_{22}
\end{pmatrix}\in \mathbb{R}^{(d+1)\times (d+1)},\qquad 
V = \begin{pmatrix}
    V_{11} & v_{12} \\
    v_{21}^\top & v_{22}
\end{pmatrix}\in \mathbb{R}^{(d+1)\times (d+1)}
\]
where $ W_{11}\in \mathbb{R}^{d\times d},w_{12},w_{21}\in \mathbb{R}^{d},  w_{22}\in \mathbb{R}$ and $V_{11}\in \mathbb{R}^{d\times d},v_{12},v_{21}\in \mathbb{R}^{d}, v_{22}\in \mathbb{R}$. Then the prediction query is 
\[
\hat{y}_{t,q} = \begin{pmatrix}
    w_{21}^\top & w_{22}
\end{pmatrix}\cdot (\frac{PP^\top}{M})\begin{pmatrix}
    V_{11} \\v_{21}^\top
\end{pmatrix}x_{t,query},
\]
Since only the last row of $W$ and the first d columns of $V$ affect the prediction, which means we can simply take all other entries zero in the following sections, corresponding to Assumption~\ref{initialization}.

Here, the inner product of the embedding matrix $P$ is
\[
PP^\top = \begin{pmatrix}
    P_1 & P_2 & \cdots & P_T & q_{\tau}\\
\end{pmatrix}\begin{pmatrix}
    P_1^\top \\ P_2^\top \\ \ldots \\ P_T^\top \\ q^\top_{\tau}
\end{pmatrix} = P_1P_1^\top + \cdots + q_{\tau}q_{\tau}^\top
\]
where\[
P_tP_t^\top = \begin{pmatrix}
 \frac{1}{M}\sum_{i=1}^Mx_{t,i}x^\top_{t,i}+\frac{1}{M}x_{t,q}x_{t,q}^\top & \frac{1}{M}\sum_{i=1}^{M}x_{t,i}y_{t,i}\\
 \frac{1}{M}\sum_{i=1}^Mx_{t,i}^\top y_{t,i} &\frac{1}{M}\sum_{i=1}^M y^2_{t,i}
\end{pmatrix}\in \mathbb{R}^{(d+1)\times (d+1)}.
\]

Based on Theorem~\ref{limit}, the prediction $\hat{y}_{t,q}$ is
\begin{align*}
    \hat{y}_{t,q} &= \begin{pmatrix}
    0_d^\top & 1
\end{pmatrix}
[\begin{pmatrix}
 \frac{1}{M}\sum_{i=1}^Mx_{1,i}x^\top_{1,i}+\frac{1}{M}x_{1,q}x_{1,q}^\top & \frac{1}{M}\sum_{i=1}^{M}x_{1,i}y_{1,i}\\
 \frac{1}{M}\sum_{i=1}^Mx_{1,i}^\top y_{1,i} &\frac{1}{M}\sum_{i=1}^M y^2_{1,i}
\end{pmatrix}+\cdots]
\begin{pmatrix}
    \Gamma^{-1} & 0_d \\ 0_d^\top & 0
\end{pmatrix}\begin{pmatrix}
    x_{t,q} \\ 0
\end{pmatrix}\\
&=x_{t,q}^\top\Gamma^{-1}\left(\frac{1}{M}\sum_{i=1}^Mx_{1,i}y_{1,i}+\cdots\right)\\
&=x_{t,q}^\top\Gamma^{-1}\left(\frac{1}{M}\sum_{s=1}^T\sum_{i=1}^M x_{s,i}y_{s,i}\right).
\end{align*}

\hfill$\square$

\vspace{-0.1in}\section{Proof of Theorem~\ref{thm-generalization} and Corollary~\ref{ave_pre}}
\label{proof_generalization}
\textit{Proof.} We divide the proof into two parts. In the first part, we will prove the prediction of Mask Linear Self-attention and in the second part we will provide the prediction error for a specific $t$.

Recall the definition of Mask Linear Self-Attention,
\[
f_{\mathrm{MSA}}(P;\theta)
= P + WP\cdot
\mathrm{mask}\!\left(P^\top VP\right).
\]
To process the mask matrix, we write it in vector form,
\[
[f_{\mathrm{MSA}}]_i = p_i + \frac{1}{i}\sum_{j=1}^i p_i^\top Vp_j\cdot Wp_j.
\]
where $p_i$ is the i-th column of the embedding matrix of $P$. So, we have
\begin{align*}
   \begin{pmatrix}
    * \\ \hat{y}_{t,q}
\end{pmatrix} &= \begin{pmatrix}
    x_{t,q} \\0
\end{pmatrix} + \frac{1}{t(M+1)}\sum_{j=1}^{t(M+1)}\begin{pmatrix}
    x_{t,j}^\top & y_{t,j}
\end{pmatrix}\begin{pmatrix}
  \Gamma^{-1}&0_d \\
  0^\top_d&0
\end{pmatrix}\begin{pmatrix}
    x_{t,q} \\ 0
\end{pmatrix}\begin{pmatrix}
    \textbf{0}_{d\times d} & 0_d \\ 0_d^\top & 1
\end{pmatrix}\begin{pmatrix}
    x_{t,j}\\y_{t,j}
\end{pmatrix} \\
\hat{y}_{t,q} &= x_{t,q}^\top\Gamma^{-1}
\Big(\frac{1}{t(M+1)}\sum_{s=1}^t \sum_{j=1}^M x_{s,j}y_{s,j}\Big)
\end{align*}

That is the prediction of MSA module, different from LSA module below. In the following, we characterize the excess risk of the prediction of a trained LSA layer with respect to the risk of the best linear predictor, on a new task which is possibly non-linear.

Unless otherwise specified, we denote $\mathbb{E}$ as the expectation over $(x_{t,i},y_{t,i}),(x_{t,q},y_{t,q})\sim \mathcal{D}$. Sine when $(x,y)\sim \mathcal{D}$, we assume that $\mathbb{E}(x),\mathbb{E}(y),\mathbb{E}(xx^\top),\mathbb{E}(yxx^\top y)$ exist. We denote \[
a = \operatorname{arg min}_{w\in \mathbb{R}^d}\mathbb{E}(w_tx_{t,q}^\top-y_{t,q})^2,\mu_t=\mathbb E[x_{t,i}y_{t,i}],
\Sigma_t=\operatorname{Var}(x_{t,i}y_{t,i}).
\]

as the weight of the best linear estimation. Actually, if we denote the function inside the minimum above as $R(w)$, we can write it as 
\[
R(w_t) = w_t^\top \Lambda w_t - 2\mathbb{E}(y_{t,q}\cdot x_{t,q}^\top)w_t + \mathbb{E}(y_{t,q}^2).
\]
Since the Hessian matrix $ \frac{\partial^2}{\partial w_t\partial w_t^\top}R(w_t) $ is $\Lambda$, which is positive definitive, we know that this function is strictly convex and hence, the global minimum can be reached at the unique first-order stationary point. That is 
\[
a = \Lambda^{-1}\mathbb{E}(x_{t,q}y_{t,q}).
\]
We also define a similar vector for ease of computation:
\[
 b =  \Gamma^{-1}\frac{M}{t(M+1)}\sum_{s=1}^t \mathbb{E}(x_{s,q}y_{s,q}) = \Gamma^{-1}\frac{M}{t(M+1)}\sum_{s=1}^t\mu_s.
\]
Therefore, we can decompose the error as 
\begin{align*}
    \mathbb{E}[(\hat y_{t,q}-y_{t,q})^2]& = \mathbb{E}(\hat{y}_{t,q} - b^\top x_{t,q})^2 +\mathbb{E}(b^\top x_{t,q} - a^\top x_{t,q}) \\
    &+ \mathbb{E}(a^\top x_{t,q} - y_{t,q})^2 + 2\mathbb{E}(\hat{y}_{t,q} - b^\top x_{t,q})(a^\top x_{t,q} - y_{t,q})\\
    &+2\mathbb{E}(\hat{y}_{t,q} - b^\top x_{t,q})(b^\top x_{t,q}-a^\top x_t,q) + 2\mathbb{E}(y_{t,q} - a^\top x_{t,q})(b^\top x_{t,q}-a^\top x_{t,q})
\end{align*}

The first term is the first term on the Theorem~\ref{thm-generalization}. So it suffices to calculate others.

From the tower property of conditional expectation, we have
\begin{align*}
  \mathbb{E}(\hat{y}_{t,q} - b^\top x_{t,q})(b^\top x_{t,q}-a^\top x_{t,q}) &= \mathbb{E}\left[\mathbb{E}\left((\hat{y}_{t,q} - b^\top x_{t,q})(b^\top x_{t,q}-a^\top x_{t,q})|x_{t,q}\right) \right] \\
  &=\mathbb{E}\left[\mathbb{E}\left((\hat{y}_{t,q} - b^\top x_{t,q})|x_{t,q}\right)(b^\top x_{t,q}-a^\top x_{t,q}) \right]=0,
\end{align*}
since\[
\mathbb{E}\left((\hat{y}_{t,q} - b^\top x_{t,q})|x_{t,q}\right) = \left(\mathbb{E}\frac{1}{t(M+1)}\Gamma^{-1}
\Big(\sum_{s=1}^t \sum_{j=1}^M x_{s,j}y_{s,j}\Big)-b\right)^\top x_{t,q}=0.
\]

Similarly, we have \begin{align*}
\mathbb{E}(\hat{y}_{t,q} - b^\top x_{t,q})(a^\top x_{t,q} - y_{t,q}) &= \mathbb{E}\left[\mathbb{E}\left((\hat{y}_{t,q} - b^\top x_{t,q})(a^\top x_{t,q} - y_{t,q})|x_{t,q},y_{t,q}\right)\right]\\
&=\mathbb{E}\left[\mathbb{E}\left((\hat{y}_{t,q} - b^\top x_{t,q})|x_{t,q},y_{t,q}\right)(a^\top x_{t,q} - y_{t,q})\right]=0.
\end{align*}

Beyond them, we also have\begin{align*}
\mathbb{E}(y_{t,q} - a^\top x_{t,q})(b^\top x_{t,q}-a^\top x_{t,q}) &= \mathbb{E}\operatorname{tr}\left[ (y_{t,q} - a^\top x_{t,q})(b-a)^\top x_{t,q}\right] \\
&=\operatorname{tr}[(b-a)a^\top\Lambda] - \operatorname{tr}[(b-a)\mathbb{E}(x_{t,q}y_{t,q})] =0.
\end{align*}
where the last comes from the definition of $a$. Therefore, all cross terms vanish and it suffices to consider the remaining.

Under the property of trace and the fact that $\Gamma$ and $\Lambda$ commute, We have
\begin{align*}
    &\mathbb{E}\left(\frac{1}{t(M+1)}\sum_{s=1}^t S_s - \frac{M}{t(M+1)}\sum_{s=1}^t\mu_s\right)\Gamma^{-1}x_{t,q}x_{t,q}^\top\Gamma^{-1}\left(\frac{1}{t(M+1)}\sum_{s=1}^t S_s - \frac{M}{t(M+1)}\sum_{s=1}^t\mu_s\right)\\
    &=\mathbb{E}\operatorname{tr}\left(\frac{1}{t(M+1)}\sum_{s=1}^t S_s - \frac{M}{t(M+1)}\sum_{s=1}^t\mu_s\right)\left(\frac{1}{t(M+1)}\sum_{s=1}^t S_s - \frac{M}{t(M+1)}\sum_{s=1}^t\mu_s\right)^\top\Gamma^{-2}\Lambda \\
    &=\frac{1}{t^2(M+1)^2}\sum_{i,j=1}^M\mathbb{E}\operatorname{tr}\left\{(\sum_{s=1}^tx_{s,i}y_{s,i} - \sum_{s=1}^t\mu_s)(\sum_{s=1}^tx_{s,j}y_{s,j} - \sum_{s=1}^t\mu_s)^\top\Gamma^{-2}\Lambda\right\}\\
    &=\frac{M}{t^2(M+1)^2}\mathbb{E}\operatorname{tr}\left\{(\sum_{s=1}^tx_{s,1}y_{s,1} - \sum_{s=1}^t\mu_s)(\sum_{s=1}^tx_{s,1}y_{s,1} - \sum_{s=1}^t\mu_s)^\top\Gamma^{-2}\Lambda\right\}\\
    &=\frac{M}{t^2(M+1)^2}\sum_{s=1}^t\operatorname{tr}(\Sigma_s\Gamma^{-2}\Lambda).
\end{align*}

Notice that all cross terms vanish due to the independence of $x_i$, and the last line comes from the definition of $\Sigma$.

For the last term, we have\begin{align*}
    &\mathbb{E}(b-a)^\top x_{t,q}x_{t,q}^\top (b-a) =  \operatorname{tr}\left[(b-a)^\top\Lambda(b-a)\right]. 
\end{align*}

We consider the quadratic form
\[
\mathcal{Q}
\;\triangleq\;
\bigl(\Gamma^{-1}\alpha S - \Lambda^{-1}\mu_t\bigr)^\top
\Lambda
\bigl(\Gamma^{-1}\alpha S - \Lambda^{-1}\mu_t\bigr),\:
\text{where}\:\:\alpha = \frac{M}{t(M+1)},\;
S=\sum_{s=1}^t \mu_s,
\]
where
\[
\Gamma
= \Bigl(1+\frac{1}{N}\Bigr)\Lambda
+ \frac{\operatorname{tr}(\Lambda)}{N} I_d .
\]

Since $\Gamma$ is a polynomial in $\Lambda$, the two matrices commute and are
simultaneously diagonalizable. Let
\[
\Lambda = U \operatorname{diag}(\lambda_1,\dots,\lambda_d) U^\top ,
\]
and define the coordinates
\[
s_i = u_i^\top S,
\qquad
m_i = u_i^\top \mu_t ,
\qquad
\tau = \operatorname{tr}(\Lambda) = \sum_{j=1}^d \lambda_j .
\]
Then
\[
\Gamma = U \operatorname{diag}(c\lambda_i + d) U^\top,
\quad
c = 1 + \frac{1}{N},
\quad
d = \frac{\tau}{N}.
\]

Using orthogonality of $U$, the quadratic form decomposes as
\[
\mathcal{Q}
= \sum_{i=1}^d
\left[
\frac{\alpha^2 \lambda_i}{(c\lambda_i+d)^2} s_i^2
- \frac{2\alpha}{c\lambda_i+d} s_i m_i
+ \frac{1}{\lambda_i} m_i^2
\right].
\]

Combining the terms over a common denominator yields
\[
\mathcal{Q}
= \sum_{i=1}^d
\frac{
\alpha^2 \lambda_i^2 s_i^2
- 2\alpha \lambda_i (c\lambda_i+d) s_i m_i
+ (c\lambda_i+d)^2 m_i^2
}{
\lambda_i (c\lambda_i+d)^2
}.
\]

Note that
\[
c\lambda_i + d
= \frac{(N+1)\lambda_i + \tau}{N}
= \frac{\lambda_i N + (\lambda_i+\tau)}{N}.
\]
Multiplying numerator and denominator by $N^2$ gives
\[
\mathcal{Q}
= \sum_{i=1}^d
\frac{
N^2 \alpha^2 \lambda_i^2 s_i^2
- 2\alpha N \lambda_i \bigl(\lambda_i N + \lambda_i+\tau\bigr) s_i m_i
+ \bigl(\lambda_i N + \lambda_i+\tau\bigr)^2 m_i^2
}{
\lambda_i \bigl(\lambda_i N + \lambda_i+\tau\bigr)^2
}.
\]

The numerator is a perfect square:
\[
N^2 \alpha^2 \lambda_i^2 s_i^2
- 2\alpha N \lambda_i \bigl(\lambda_i N + \lambda_i+\tau\bigr) s_i m_i
+ \bigl(\lambda_i N + \lambda_i+\tau\bigr)^2 m_i^2
=
\bigl(\lambda_i (\alpha s_i - m_i) N - (\lambda_i+\tau) m_i\bigr)^2 .
\]
Hence,
\[
\mathcal{Q}
=
\sum_{i=1}^d
\frac{
\bigl(\lambda_i (\alpha s_i - m_i) N - (\lambda_i+\tau) m_i\bigr)^2
}{
\lambda_i \bigl(\lambda_i N + (\lambda_i+\tau)\bigr)^2
}.
\]

Equivalently,
\[
\mathcal{Q}
=
\sum_{i=1}^d
\frac{1}{\lambda_i}
\left(
\frac{\lambda_i(\alpha s_i - m_i) N - (\lambda_i+\tau)m_i}
{\lambda_i N + (\lambda_i+\tau)}
\right)^2 ,
\]
which is an exact expression exhibiting the dependence on $N$. So, overall, the prediction error satisfies
\[
\begin{aligned}
\mathbb{E}[(\hat y_{t,q}-y_{t,q})^2]
=&\min_{w\in\mathbb{R}^d} \mathbb{E}(w^\top x_{t,q} - y_{t,q})^2 +\frac{M}{t^2(M+1)^2}\,\sum_{s=1}^t\mathrm{tr}(\Sigma_s\,\Gamma^{-2}\Lambda)+\sum_{i=1}^d \frac{1}{\lambda_i}\Big(\frac{\lambda_i(\alpha s_i - m_i) - m_i\frac{\lambda_i + \operatorname{tr}(\Lambda)}{N}}{\lambda_i + \frac{\lambda_i + \operatorname{tr}(\Lambda)}{N}} \Big)^2,
\end{aligned}
\]
where $\alpha=\frac{M}{t(M+1)},s_i = u_i^\top\sum_{s=1}^t\mu_s,m_i = u_i^\top\mu_t$ and $ \Lambda =\sum_{i=1}^d\lambda_i u_iu_i^\top$.

For Corollary~\ref{ave_pre}, we proceed by diagonalizing $\Lambda$. Let
\[
\Lambda = U \operatorname{diag}(\lambda_1,\dots,\lambda_d) U^\top, 
\quad s_i = u_i^\top \sum_{s=1}^T \mu_s, \quad m_{i,t} = u_i^\top \mu_t, \quad \tau = \mathrm{tr}(\Lambda).
\]

Then $\Gamma = \left(1+\frac{1}{N}\right)\Lambda + \frac{\tau}{N} I_d$, so in each eigen-direction $u_i$ we have
\[
d_i = \lambda_i + \frac{\lambda_i + \tau}{N}.
\]

Hence for each $t$ and $i$, the contribution in the $i$-th eigen-direction is
\[
Q_{i,t} = \frac{1}{\lambda_i} \Big(\frac{\lambda_i \beta s_i}{d_i} - m_{i,t} \Big)^2.
\]

\[
\frac{1}{T}\sum_{t=1}^T Q_{i,t} = \frac{1}{T}\sum_{t=1}^T \Big(\frac{\lambda_i \beta s_i}{d_i} - m_{i,t} \Big)^2.
\]

Let $\bar m_i = \frac{1}{T}\sum_{t=1}^T m_{i,t}$. Then
\[
\begin{aligned}
\frac{1}{T}\sum_{t=1}^T \Big(\frac{\lambda_i \beta s_i}{d_i} - m_{i,t} \Big)^2 
&= \frac{1}{T}\sum_{t=1}^T \Big(\frac{\lambda_i \beta s_i}{d_i} - \bar m_i + \bar m_i - m_{i,t} \Big)^2 \\
&= \frac{1}{T}\sum_{t=1}^T \Big( (\frac{\lambda_i \beta s_i}{d_i} - \bar m_i)^2 + 2(\frac{\lambda_i \beta s_i}{d_i} - \bar m_i)(\bar m_i - m_{i,t}) + (\bar m_i - m_{i,t})^2 \Big) \\
&= (\frac{\lambda_i \beta s_i}{d_i} - \bar m_i)^2 + \frac{1}{T}\sum_{t=1}^T (m_{i,t} - \bar m_i)^2.
\end{aligned}
\]
The cross term vanishes because $\sum_{t=1}^T (m_{i,t}-\bar m_i)=0$.

Finally, summing over $i=1,\dots,d$, we obtain
\[
\frac{1}{T}\sum_{t=1}^T R_t = \sum_{i=1}^d \frac{1}{\lambda_i} \Big[ (\bar m_i - \frac{\lambda_i \beta s_i}{d_i})^2 + \frac{1}{T}\sum_{t=1}^T (m_{i,t}-\bar m_i)^2 \Big].
\]

Hence, the full average prediction error becomes
\begin{align*}
    \frac{1}{T}\sum_{t=1}^T \mathbb{E}\big[(\hat Y_{t,q}-y_{t,q})^2\big] &=
\frac{1}{T}\sum_{t=1}^T \min_{w\in\mathbb{R}^d} \mathbb{E}(w^\top x_{t,q} - y_{t,q})^2\\
&+ \frac{M}{(TM+T+1)^2} \sum_{s=1}^T \mathrm{tr}(\Sigma_s \Gamma^{-2}\Lambda)
+ \sum_{i=1}^d \frac{1}{\lambda_i} \Big[ (\bar m_i - \frac{\lambda_i \beta s_i}{d_i})^2 + \frac{1}{T}\sum_{t=1}^T (m_{i,t}-\bar m_i)^2 \Big].
\end{align*}

\hfill$\square$

\vspace{-0.1in}\section{Proof of Theorem~\ref{forgetting} and Corollary~\ref{d_forgetting}}
\textit{Proof.} Similar to $\hat{y}_{t,q}$, we have
\begin{align*}
   \begin{pmatrix}
    * \\ \hat{Y}_{t,q}
\end{pmatrix} &= \begin{pmatrix}
    x_{t,q} \\0
\end{pmatrix} + \frac{1}{T(M+1)+1}\sum_{j=1}^{T(M+1)}\begin{pmatrix}
    x_{t,j}^\top & y_{t,j}
\end{pmatrix}\begin{pmatrix}
  \Gamma^{-1}&0_d \\
  0^\top_d&0
\end{pmatrix}\begin{pmatrix}
    x_{t,q} \\ 0
\end{pmatrix}\begin{pmatrix}
    \textbf{0}_{d\times d} & 0_d \\ 0_d^\top & 1
\end{pmatrix}\begin{pmatrix}
    x_{t,j}\\y_{t,j}
\end{pmatrix} \\
\hat{Y}_{t,q} &= x_{t,q}^\top\Gamma^{-1}
\Big(\frac{1}{T(M+1)+1}\sum_{s=1}^T \sum_{j=1}^M x_{s,j}y_{s,j}\Big).
\end{align*}

Comparing the two estimations, we have 
\begin{align*}
    \hat{Y}_{t,q} - \hat{y}_{t,q} = x_{t,q}^\top\Gamma^{-1}(\sum_{i=1}^tc_iS_i + d\sum_{i=t+1}^TS_i),
\end{align*}
where\[
c_i = - \frac{(T-t)M + T+1-t}{t(M+1)(T+M+1)},\quad d = \frac{1}{T(M+1)+1}.
\]

Let
\[
B=\sum_{i=1}^t c_i S_i + d\sum_{i=t+1}^T S_i.
\]
Then
\[
(\hat{Y}_{t,q}-\hat{y}_{t,q})^2 = (x_{t,q}^\top \Gamma^{-1} B)^2 = \mathrm{tr}\big( B B^\top \Gamma^{-1} x_{t,q} x_{t,q}^\top \Gamma^{-1}\big).
\]
Taking expectation and using iterated expectation on \(x_{t,q}\),
\[
\mathbb{E}\big[(\hat{Y}_{t,q}-\hat{y}_{t,q})^2\big]
= \mathrm{tr}\big(\mathbb{E}[BB^\top]\Gamma^{-2}\Lambda\big).
\]
where $\Gamma$ and $\Lambda$ are commute.

Writing \(B=\sum_{i=1}^T\alpha_i S_i\) with \(\alpha_i=c_i\) for \(i\le t\) and \(\alpha_i=d\) for \(i>t\), we have
\[
\mathbb{E}[BB^\top] = \sum_{i=1}^T\sum_{j=1}^T \alpha_i\alpha_j\,\mathbb{E}[S_iS_j^\top].
\]
If \(S_i\) are sums of i.i.d. terms with means \(\mu_i\) and covariances \(\Sigma_i\), and groups are independent. Following the Lemma~\ref{useful_lemma}, we have
\[
\mathbb{E}[S_iS_j^\top] =
\begin{cases}
M\Sigma_i + M^2\mu_i\mu_i^\top, & i=j,\\
M^2\mu_i\mu_j^\top, & i\neq j.
\end{cases}
\]
Hence
\[
\mathbb{E}\big[(\hat{Y}_{t,q}-\hat{y}_{t,q})^2\big]
= \sum_{i=1}^T \alpha_i^2\Big( M\,\mathrm{tr}(\Sigma_i \Gamma^{-2}\Lambda) + M^2\,\mathrm{tr}(\mu_i\mu_i^\top \Gamma^{-2}\Lambda)\Big)
+ \sum_{i\neq j}\alpha_i\alpha_j\,M^2\,\mathrm{tr}(\mu_i\mu_j^\top \Gamma^{-2}\Lambda).
\]

The proof of Corollary~\ref{d_forgetting} is evident; one needs only unfold the interactions between different tasks according to the task sequence.

\hfill$\square$

\vspace{-0.1in}\section{Proof of Corollary~\ref{aver}}
\textit{Proof.} Recall that the interference error at task $t$ is given by
\[
\mathbb{E}\big[(\hat{Y}_{t,q}-\hat y_{t,q})^2\big]
=
\sum_{i=1}^T \alpha_i(t)^2
\,\mathrm{tr}\!\big[(M\Sigma_i+M^2\mu_i\mu_i^\top)\Gamma^{-2}\Lambda\big]
+
M^2\!\sum_{i\ne j}\alpha_i(t)\alpha_j(t)\,
\mathrm{tr}(\mu_i\mu_j^\top\Gamma^{-2}\Lambda),
\]
where
\[
\alpha_i(t)=
\begin{cases}
c_t, & i\le t,\\
d, & i>t,
\end{cases}
\qquad
d=\dfrac{1}{TM+T+1}.
\]

We consider the time-averaged interference error
\[
\frac{1}{T}\sum_{t=1}^T \mathbb{E}\big[(\hat{Y}_{t,q}-\hat y_{t,q})^2\big].
\]
Substituting the above expression and exchanging the order of summation yields
\[
\frac{1}{T}\sum_{t=1}^T \mathbb{E}\big[(\hat{Y}_{t,q}-\hat y_{t,q})^2\big]
=
\sum_{i=1}^T S_i
\,\mathrm{tr}\!\big[(M\Sigma_i+M^2\mu_i\mu_i^\top)\Gamma^{-2}\Lambda\big]
+
M^2\!\sum_{i\ne j} S_{ij}\,
\mathrm{tr}(\mu_i\mu_j^\top\Gamma^{-2}\Lambda),
\]
where the averaged coefficients are defined as
\[
S_i \triangleq \frac{1}{T}\sum_{t=1}^T \alpha_i(t)^2,
\qquad
S_{ij} \triangleq \frac{1}{T}\sum_{t=1}^T \alpha_i(t)\alpha_j(t),
\quad i\ne j.
\]

For a fixed task index $i$, note that $\alpha_i(t)=d$ for $t<i$ and $\alpha_i(t)=c_t$ for $t\ge i$. Hence,
\[
\sum_{t=1}^T \alpha_i(t)^2
=
\sum_{t=1}^{i-1} d^2
+
\sum_{t=i}^T c_t^2
=
(i-1)d^2 + \sum_{t=i}^T c_t^2,
\]
which gives
\[
S_i
=
\frac{1}{T}\Big((i-1)d^2 + \sum_{t=i}^T c_t^2\Big).
\]

Without loss of generality, assume $i<j$. The product $\alpha_i(t)\alpha_j(t)$ admits the following partition:
\[
\alpha_i(t)\alpha_j(t)=
\begin{cases}
d^2, & t<i,\\[3pt]
c_t d, & i\le t<j,\\[3pt]
c_t^2, & t\ge j.
\end{cases}
\]
Therefore,
\[
\sum_{t=1}^T \alpha_i(t)\alpha_j(t)
=
\sum_{t=1}^{i-1} d^2
+
\sum_{t=i}^{j-1} c_t d
+
\sum_{t=j}^T c_t^2,
\]
and hence
\[
S_{ij}
=
\frac{1}{T}\Big(
(i-1)d^2
+
\sum_{t=i}^{j-1} c_t d
+
\sum_{t=j}^T c_t^2
\Big).
\]
By symmetry, the same expression holds for $i>j$.

Combining the expressions of $S_i$ and $S_{ij}$ completes the derivation of the time-averaged interference error and proves the claim.

\hfill$\square$

\vspace{-0.1in}\section{Additional Experiments Results and Setting Details}
\label{app:exp_details}
\noindent \textbf{Model Architecture.}
We use decoder-style GPT2 transformers trained on linear regression tasks with mean-zero Gaussian inputs. Each task is associated with a fixed identity covariance matrix across prompts. The training procedure follows~\citet{garg2022what}, where the model learns to perform in-context learning by observing diverse task instances. 
During inference, the model parameters remain fixed, and adaptation occurs solely through the attention mechanism.

\noindent \textbf{Data Generation.}
Each task corresponds to a linear regression problem: input vectors $x \in \mathbb{R}^d$ are drawn from a standard Gaussian distribution $\mathcal{N}(0, I_d)$, and task-specific weight vectors $w \in \mathbb{R}^d$ define the linear mapping. Task identity is implicit, determined solely by the ordering of examples within the prompt sequence.

\noindent \textbf{Training Hyperparameters}
The synthetic experiments utilized two scales of decoder-style GPT-2 models. The models were trained from scratch using the Adam optimizer. The detailed hyperparameters are provided below:

\begin{itemize}
    \item \textbf{Standard GPT-2}: 12 layers, 8 attention heads, and an embedding dimension of 256 ($\sim$ 9.5M parameters). The model was trained with a learning rate of $1 \times 10^{-4}$ and a batch size of 64 for 500k steps.
    \item \textbf{Tiny GPT-2}: 3 layers, 2 attention heads, and an embedding dimension of 64 ($\sim$ 0.2M parameters). The model was trained using similar optimization settings. The final converged training loss refers to Figure~\ref {fig:training}.
\end{itemize}

\noindent \textbf{Evaluation.}
For each experiment, we generate multiple sequences of tasks and measure the model's prediction error on query points. We report the mean squared error (MSE) averaged over multiple batches, along with standard deviations. The evaluation metrics directly correspond to the theoretical quantities introduced in Sections~\ref{performance_defination} and~\ref{theory}, including per-task prediction error, forgetting error (the difference between intermediate-task predictions and final predictions after processing all tasks), and the average interference error across all tasks.

\begin{figure}[h]
\vspace{0.1in}
\begin{center}
\centerline{
\begin{tabular}{ccc}
\includegraphics[width=0.5\columnwidth]{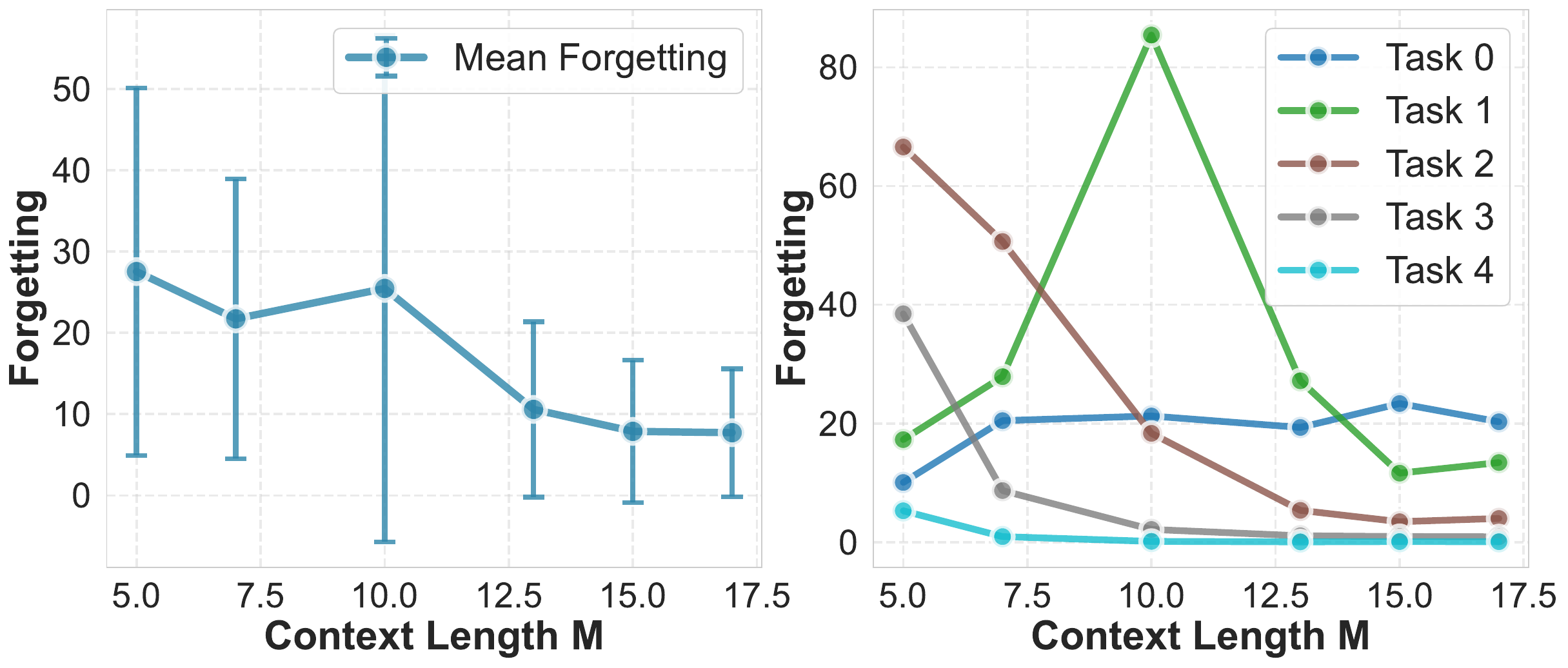}&
\includegraphics[width=0.5\columnwidth]{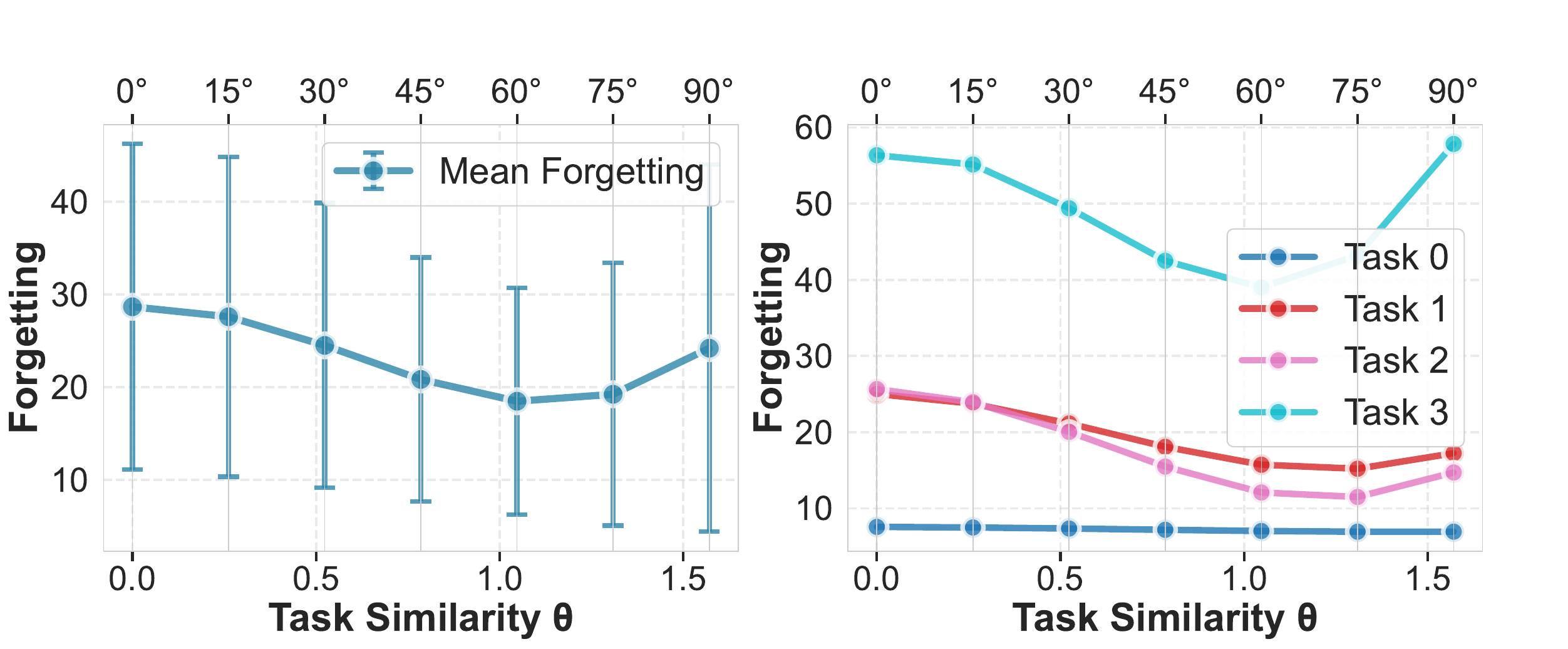}\\
(a) Context Length. & (b) Task Similarity.
\end{tabular}
}
\caption{Different factors influence forgetting.}
\label{fig:M_si}
\end{center}
\vspace{-0.1in}
\end{figure}

\subsection{Context Length}
\noindent \textbf{Experimental design.}
We fix the number of tasks $T = 5$ and vary the number of in-context examples per task $M \in \{1, 2, 3, 5, \ldots,20\}$. For each value of $M$, we evaluate the prediction error for each task position and compute the overall average MSE.

In addition to presenting in the main text how $M$ affects generalization behavior, we here provide supplementary results on how context length influences forgetting, as illustrated in Figure~\ref{fig:M_si}. As discussed in the main text, increasing the context length effectively reduces variance-induced interference, but forgetting remains due to irreducible interference arising from mean misalignment across tasks.

\subsection{Task Similarity}
\noindent \textbf{Experimental design.}
We construct task parameters with controlled similarity by generating task weight vectors $w_t$ such that the angle between task means is controlled by a parameter $\theta$. We fix the number of tasks $T = 5$ and the context length $M = 20$, and vary $\theta \in \{0, \pi/12, \pi/6, \pi/4, \pi/3, \pi/2\}$. To ensure fair comparison, we use the same set of fixed task weights across all $\theta$ values, only varying the relationship between tasks.

In addition to the main text, where we show how task similarity affects generalization behavior, we here provide supplementary results on its impact on forgetting in Figure~\ref{fig:M_si}. As discussed, forgetting is amplified when future tasks are statistically similar to the current task.

\subsection{Task Ordering}
\noindent \textbf{Experimental design.}
We evaluate prediction error for intermediate tasks both before and after processing subsequent tasks. Specifically, for a sequence of $T$ tasks, we measure the prediction error for task $t$ at two points: (1) immediately after task $t$ is presented (before processing tasks $t+1, \ldots, T$), and (2) after all $T$ tasks have been processed. We also compare different task orderings while keeping the task set fixed, using permutations of the same task weights to isolate the effect of order from task identity.

\subsection{More Complex Tasks}
\label{app:nonlinear}
\noindent \textbf{Experimental design.}
To verify that the systematic bias and inter-task interference observed in our linear framework extend to highly non-linear environments, we evaluated the models on Sparse Linear Regression and two-layer ReLU Neural Networks. The two-layer ReLU network is formally defined as $f(x) = \sum_{i=1}^{r}\alpha_{i}\sigma(w_{i}^{\top}x)$. We maintained the same sequence structure of $T=5$ tasks and varied the context length $M$. For evaluation, the normalized MSE is computed by applying global min-max scaling across all evaluated context lengths and tasks within the same task family.

\noindent \textbf{Extended Results.}
Table~\ref{tab:app_nonlinear_full} presents the complete set of normalized MSE values across all 5 tasks and 10 different context lengths. Consistent with the theoretical predictions in the main text, increasing $M$ initially provides variance reduction (visible in Task 1). However, for subsequent tasks (Tasks 3, 4, and 5), incorporating misaligned historical context introduces severe systematic bias, causing the error to rise sharply at larger context lengths.

\begin{table*}[t]
\centering
\caption{Complete Normalized MSE for Non-linear Tasks across varying Context Lengths ($M$).}
\label{tab:app_nonlinear_full}
\begin{tabular}{lcccccc}
\toprule
\textbf{Task Type} & $M$ & \textbf{Task 1} & \textbf{Task 2} & \textbf{Task 3} & \textbf{Task 4} & \textbf{Task 5} \\
\midrule
\multirow{10}{*}{\textbf{Sparse Linear}} 
 & 1  & 0.1709 & 0.1728 & 0.1979 & 0.2240 & 0.2433 \\
 & 2  & 0.1564 & 0.1929 & 0.2555 & 0.3038 & 0.3842 \\
 & 3  & 0.1267 & 0.2075 & 0.3220 & 0.3937 & 0.4173 \\
 & 5  & 0.0880 & 0.2629 & 0.3933 & 0.3714 & 0.3483 \\
 & 7  & 0.0504 & 0.2907 & 0.3625 & 0.3206 & 0.3278 \\
 & 9  & 0.0235 & 0.3003 & 0.2993 & 0.3192 & 0.8545 \\
 & 11 & 0.0091 & 0.2818 & 0.2974 & 0.7724 & 0.9095 \\
 & 13 & 0.0053 & 0.2537 & 0.7137 & 0.8742 & 0.9182 \\
 & 15 & 0.0015 & 0.2334 & 0.7347 & 0.8773 & 0.9787 \\
 & 19 & 0.0000 & 0.2255 & 0.8571 & 1.0000 & 0.9905 \\
\midrule
\multirow{10}{*}{\textbf{2-layer ReLU NN}} 
 & 1  & 0.5334 & 0.6092 & 0.6264 & 0.6723 & 0.7179 \\
 & 2  & 0.4414 & 0.5887 & 0.6964 & 0.8010 & 0.8480 \\
 & 3  & 0.3693 & 0.5726 & 0.7137 & 0.8356 & 0.8905 \\
 & 5  & 0.2992 & 0.6407 & 0.7726 & 0.9099 & 0.9677 \\
 & 7  & 0.2246 & 0.5856 & 0.8496 & 0.8851 & 0.9494 \\
 & 9  & 0.1657 & 0.6196 & 0.8154 & 0.9527 & 0.9506 \\
 & 11 & 0.0995 & 0.5666 & 0.8178 & 0.8963 & 0.9268 \\
 & 13 & 0.0900 & 0.6166 & 0.8007 & 0.8868 & 0.9881 \\
 & 15 & 0.0478 & 0.5639 & 0.7575 & 0.9403 & 0.9032 \\
 & 19 & 0.0000 & 0.5861 & 0.7313 & 0.8367 & 1.0000 \\
\bottomrule
\end{tabular}
\end{table*}

\subsection{Real-world LLMs: Prompting and Evaluation Details}
\label{app:realworld_details}
\noindent \textbf{Experimental setup.}
To validate our theory in realistic NLP scenarios, we utilized the HuggingFace `transformers` library to evaluate the \texttt{Qwen2.5-1.5B-Instruct} model on a sequence of heterogeneous text classification tasks. We selected SST-2 (binary sentiment classification) as Task A and AG News (four-class topic classification) as Task B. For few-shot demonstrations, we sampled examples from the respective training splits, while the evaluation queries were drawn from the SST-2 validation split and the AG News test split. We utilized a standard causal language modeling generation pipeline with \texttt{max\_new\_tokens=32}.

\noindent \textbf{Prompt Construction.}
We concatenated $M$ historical examples from Task A, followed by $M$ historical examples from Task B, and appended the final target query. Task identity was implicitly provided by the demonstration format. The precise prompt template structure is as follows:

\begin{tcolorbox}[colback=gray!5!white, colframe=gray!75!black, title=Prompt Template for ICCL Evaluation, arc=2mm]
\small
\begin{verbatim}
You are an expert text classifier. Read the examples and classify 
the final text accordingly.

--- Task 1: Sentiment Analysis ---
Text: [SST-2 Text]
Label: [Positive / Negative]

... (M examples of Task A) ...

--- Task 2: News Topic Classification ---
Text: [AG News Text]
Label: [World / Sport / Business / Science]

... (M examples of Task B) ...

--- Task [1 or 2]: [Task Name] ---
Text: [Target Query Text]
Label:
\end{verbatim}
\end{tcolorbox}

\noindent \textbf{Evaluation Metric.}
Rather than relying on log-probabilities, we extracted the model's prediction directly from the generated free-text completion. We mapped the output to the corresponding class ID through robust sub-string keyword matching (e.g., matching ``positive''/``negative'' for SST-2, and ``world''/``sport''/``business''/``science''/``tech'' for AG News). Accuracy was then computed against the ground-truth labels.

\end{document}